\documentclass[twoside,11pt]{jmlr} 
\usepackage{times,mdframed,todonotes,wrapfig}
\usepackage{graphicx,graphics}

\usepackage{amsmath}
\usepackage{amssymb}
\usepackage{xspace}
\usepackage{bm}
\usepackage{algorithm}
\usepackage{bbm}

\newlength{\minipagewidth}
\setlength{\minipagewidth}{\linewidth}
\setlength{\fboxsep}{3mm}
\addtolength{\minipagewidth}{-\fboxrule}
\addtolength{\minipagewidth}{-\fboxrule}
\addtolength{\minipagewidth}{-\fboxsep}
\addtolength{\minipagewidth}{-\fboxsep}

\newcommand{\abstain}{*}

\newcommand{\loss}{\ell}
\newcommand{\hloss}{\wh{\loss}}
\newcommand{\tloss}{\wt{\loss}}

\newcommand{\qed}{\hfill\blacksquare}

 \newcommand{\A}{\mathcal{A}}

\newcommand{\II}[1]{\mathbb{I}_{\left\{#1\right\}}}

\newcommand{\EEb}[1]{\mathbb{E}\bigl[#1\bigr]}
\newcommand{\EE}[1]{\mathbb{E}\left[#1\right]}

\newcommand{\EEcc}[2]{\mathbb{E}\left[\left.#1\right|#2\right]}

\def\argmax{\mathop{\mbox{ arg\,max}}}
\newcommand{\ra}{\rightarrow}

\newcommand{\ev}[1]{\left\{#1\right\}}
\newcommand{\pa}[1]{\left(#1\right)}

\newcommand{\wh}{\widehat}
\newcommand{\wt}{\widetilde}

\newcommand{\ab}{\alpha}

\definecolor{PalePurp}{rgb}{0.66,0.57,0.66}

\title{Fast Rates for Online Prediction with Abstention} 
\author{\Name[{Gergely~Neu}]{Gergely Neu} \Email{gergely.neu@gmail.com}\\
 \addr Universitat Pompeu Fabra, Barcelona, Spain
 \\
 \Name[{Nikita~Zhivotovskiy}]{Nikita Zhivotovskiy} \Email{zhivotovskiy@google.com}\\
 \addr Google Research, Brain Team, Z\"urich, Switzerland
 }
\begin{document}
\maketitle
\begin{abstract} 
In the setting of sequential prediction of individual $\{0, 1\}$-sequences with expert advice, we show that by allowing the learner to abstain from the prediction by paying a cost marginally smaller than $\frac 12$ (say, $0.49$), it is possible to achieve expected regret bounds that are independent of the time horizon $T$. We exactly characterize the dependence on the abstention cost $c$ and the number of experts $N$ by providing matching upper and lower bounds of order $\frac{\log N}{1-2c}$, which is to be contrasted with the best possible rate of $\sqrt{T\log N}$ that is available without the option to abstain. We also discuss various extensions of our model, including a setting where the sequence of abstention costs can change arbitrarily over time, where we show regret bounds interpolating between the slow and the fast rates mentioned above, under some natural assumptions on the sequence of abstention costs.
\end{abstract} 

\section{Introduction}
Consider the problem of online prediction of individual sequences which is one of the first and well-studied \emph{online learning} models \citep{cesa2006prediction}. In this setup, a \emph{learner} and an \emph{environment} interact in a sequence of rounds $t = 1, \ldots, T$ as follows. In round $t$, the learner observes the $\ev{0,1}$-valued predictions of $N$ \emph{experts}, denoted as $y_{t,1},y_{t,2},\dots,y_{t,N}$. Based on these observations and possibly some use of randomness, the learner predicts $\wh{y}_t\in\ev{0,1}$ and then the environment reveals $y_t\in\ev{0,1}$. In particular, the environment can be aware of the strategy of the learner but not the random bits used for producing $\wh{y}_t$. Having made its prediction, the learner suffers the real-valued loss $\wh{\ell}_{t} = \ell(\wh{y}_t, y_t)$, which in this paper is chosen as the binary loss  $\wh{\ell}_{t} = \II{\wh{y}_t\neq y_t}$. The aim of the learner is to minimize its \emph{regret} which is the difference between the total loss of the prediction strategy and the total loss of the best fixed expert chosen in full knowledge of the sequence of outcomes.

Since the naive upper bound $O(T)$ always holds for the regret, we are aiming for better dependencies. It is well known that it is not possible to guarantee non-trivial bounds on the regret when the learner's predictions are deterministic. Therefore, we allow the learner to randomize its decisions and measure its performance in terms of the \emph{expected regret} defined formally as 
\[
 R_T = \max_{i\in[N]} \EE{\sum_{t=1}^T \pa{\hloss_t - \ell_{t, i}}},
\]
where the expectation is taken with respect to the randomness injected by the learner and $\ell_{t,i}$ is the loss suffered by expert $i$ in round $t$. As shown in the seminal work of \cite{littlestone1994weighted}, it is possible to construct an algorithm that achieves regret of order $O(\sqrt{T\log N})$ in this setting, and this is essentially the best one we hope for as long as the binary loss together with randomization is considered---see also the classic works of \citet{Han57}, \citet{cover1966behavior} and \citet{vovk1990aggregating} and the excellent monograph by \citet{cesa2006prediction}. Notably, the result holds without any assumptions on the behaviour of the environment which can be completely adversarial. 

One natural direction of research is to understand when it is possible to improve on the worst-case regret guarantees mentioned above. Of particular interest are bounds that are independent on the time horizon $T$. Bounds of this type are sometimes refereed to as the \emph{fast rates} in online learning (to be contrasted with the worst-case \emph{slow rate} mentioned above). There are several conditions on the loss function or the sequence of outcomes that are known to imply fast rates. 
A classic result due to \citet*{haussler1998sequential} and  \citet{vovk1990aggregating} claims that if the learner is allowed to output $y_t\in [0,1]$ and the loss function $\ell$ is convex with sufficient curvature, then there exists a deterministic prediction strategy guaranteeing $R_T = O(\log N)$. A second type of assumptions that can lead to fast rates even in the case of the binary loss is that the outcomes $y_t$ are generated in an i.i.d.~manner, rather than being chosen by an adversarial environment. 
In this favourable setup, assuming that the set of experts and the distribution of $y_t$ satisfy the so-called \emph{Bernstein assumption} (introduced originally in the statistical learning literature by \citealp{Bartlett06}),
\citet{koolen2016combining} show that it is possible to obtain  intermediate rates between $O(\log N)$ and $O\left(\sqrt{T\log N}\right)$ depending on the parameters in the Bernstein assumption. Similar results are well known in the statistical learning setup (see, e.g., \citealp{Tsybakov04}). For an extensive survey on fast rates in online learning, we refer the interested reader to \citet{Erven15}.

In this paper we show that a simple variation on the basic protocol of online binary prediction also permits regret of $O(\log N)$: instead of forcing the learner to output a prediction in $\ev{0,1}$, we also allow the learner to \emph{abstain from prediction}. This model is sometimes referred to as
\emph{Chow's reject option model}, introduced in the seminal work of \cite{Chow70}. This setup was intensively studied in \emph{statistical learning} where the learner can output one of three values $\{0, 1, *\}$ and the value $*$ corresponds to an abstention (see, e.g., the works of \citealp{Wegkamp06, bartlett2008classification, bousquet2019fast}). 
In this model the loss is assumed to be binary on $\{0, 1\}$ outputs but the price for abstention is assumed to be equal to $c < \frac{1}{2}$. This means that the learner may profit from using the option to abstain whenever they believe that there is no way to predict $y_t$ better than by flipping an unbiased coin. We only assume that $c$ is marginally smaller than $\frac{1}{2}$ (say, 0.49), so that there is no reason for the learner to abstain too often if they want to have a small regret. 
\begin{mdframed}[roundcorner=12pt,linewidth=1pt,leftmargin=2cm,rightmargin=2cm,topline=true,bottomline=true,skipabove=6pt]
\begin{center}
    \textbf{The protocol of online binary prediction with abstentions}
\end{center}
For each round $t=1,2,\dots,T$, repeat:
\begin{enumerate}
 \item The learner observes the $\ev{0,1}$-valued predictions of $N$ experts, denoted as 
$y_{t,1},\dots,y_{t,N}$,
 \item based on these observations and possibly some use of randomness, the learner predicts 
$\wh{y}_t\in\ev{0,\abstain,1}$,
 \item the environment reveals $y_t\in\ev{0,1}$,
 \item the learner suffers the loss $\wh{\ell}_t = \II{\wh{y}_t\neq y_t}$ if $\wh{y}_t\in\ev{0,1}$, otherwise a 
loss of $\wh{\ell}_t = c \le \frac{1}{2}$ if $\wh{y}_t = \abstain$.
\end{enumerate}
\end{mdframed}
We stress that even when deciding to abstain, the learner still gets to observe the true outcome.
The first thing to notice regarding the hardness of our setting is that the learner still needs to randomize its decisions in order to achieve sublinear regret. Specifically, one can easily adapt the classic counterexample of \citet{cover1966behavior} to show that any deterministic strategy will result in a regret of at least $\frac{cT}{2}$ for the learner---we provide the details in Appendix~\ref{app:example}.

We are not the first to explore the effects of allowing abstentions in the context of online learning. The most notable contribution is the work of \citet*{cortes18a}, who extended the model of \citet{Wegkamp06,bartlett2008classification} from the setting of statistical learning to that of online learning and provided a range of results for both stochastic and adversarial environments. A crucial feature of their setup is that abstention is an action available to the \emph{experts}: each expert produces a prediction in $\ev{0,*,1}$, which is then used by the learner for producing its their own randomized action. Furthermore, when deciding to abstain, the learner does not get to observe the outcome produced by the environment. In contrast, our setting treats abstention as an action \emph{only available to the learner}, and the true outcome is always revealed to the learner at the end of the round independently of the learner's decision. Thus, the setting of \citet{cortes18a} is in many senses more complicated than ours, and as a result their regret guarantees are weaker: they only match the worst-case rate of $O(\sqrt{T\log N})$. It is unclear if it is possible to refine their techniques to obtain fast rates comparable to the ones we prove. 
Another work related to ours is that of \citet{zhang2016extended}, who considered an online prediction problem with an abstention option under a \emph{realizability assumption} and noted the possibility of adapting Chow's reject option model to the sequential prediction setting as an interesting direction of future work---which is precisely what we address in this paper.

Our pursuit of fast rates is motivated by the recent results of \citet{bousquet2019fast} who have shown for the first time that allowing the learner to abstain from prediction enables fast rates in \emph{statistical learning}. Notably, their results were obtained without any further assumptions on the data distribution (i.e., no Bernstein condition) or on the curvature of the loss function (i.e., binary loss). Their approach is based on an application of the empirical risk minimization principle on an extended set of classifiers, resulting in a \emph{deterministic} algorithm guaranteeing an excess risk of at most $O(\frac{\log N}{T})$ when the set of classifiers is finite. While our results are of similar flavor, the approach that we put forward in this paper is different in almost every respect from theirs.

\paragraph{Our contributions.} Our main contribution the following: For a fixed abstention cost $c < \frac{1}{2}$ we show in Section~\ref{sec:main} that there is an algorithm with the regret bound
    \[
    R_T \le \frac{\log N}{1 - 2c} \wedge \sqrt{\frac{T\log N}{2}}.
    \]
    We also prove that our bound is tight up to multiplicative constant factors if $c$ is bounded away from zero (Section~\ref{sec:lower}). Assuming that the abstention cost $c_t$ changes in time, we prove intermediate rates between $O(\log N)$ and $O\left(\sqrt{T\log N}\right)$ in Section~\ref{sec:changingcosts}. Finally, we provide two natural extensions of our main result in Section~\ref{sec:ext}.

\paragraph{Notation.} We define $a \wedge b = \min\{a, b\}$ and $a \lor b = \max\{a, b\}$. We also use the standard $O(\cdot), \Theta(\cdot), \Omega(\cdot)$ notation. The indicator of the event $A$ is denoted by $\II{A}$. Given integer $K$ we define $[K] = \{1, \ldots, K\}$. 

\section{Algorithm and main result}\label{sec:main}
Our algorithm is based on the classic exponentially weighted forecaster of \citet{littlestone1994weighted} (see also \citealp{vovk1990aggregating,FS97}), and is described as 
follows. In each round $t$, the algorithm computes the set of weights
\[
 w_{t,i} = e^{-\eta \sum_{k=1}^{t-1} \ell_{k,i}}
\]
for some learning rate $\eta>0$,
the corresponding probability distribution over all experts $i\in[N]$
\[
 q_{t,i} = \frac{w_{t,i}}{\sum_j w_{t,j}}, \qquad\mbox{and the aggregated prediction}\qquad p_t = \sum_i q_{t,i} y_{t,i}.
\]
The crucial step in the algorithm design is generating a randomized $\ev{0,*,1}$-valued prediction based on $p_t$. The 
key intuition is that the learner should abstain whenever $p_t = \frac 12$ and output a 
deterministic prediction whenever $p_t \in \ev{0,1}$. We choose to linearly interpolate between 
these extreme cases by producing a randomized output supported on $\ev{0,\abstain}$ when $p_t<\frac 
12$ and $\ev{\abstain,1}$ otherwise.
Precisely, our algorithm will base its decision on the probability of the most likely label $p_t^* = p_t \lor (1-p_t)$: it will abstain with probability $\alpha_t = 2(1-p_t^*)$, and output the most likely label with probability $1-\alpha_t$. The procedure is compactly presented as Algorithm~\ref{alg:main}.

\begin{algorithm}[h]
\textbf{Input:} learning rate $\eta>0$.\\
\textbf{Initialization:} set $w_{1,i} = 1$ for all $i\in[N]$.\\
\textbf{For $t=1,2,\dots,T$, repeat}
\begin{enumerate}
 \item Observe the advice $y_{t,i}$ of all experts $i\in[N]$.
 \item Calculate the mean prediction $p_t = \sum_{i\in[N]}\frac{w_{t,i}y_{t,i}}{\sum_{j\in[N]}w_{t,j}}$. 
 \item Let  $p_t^* = p_t \lor (1-p_t)$ and $k_t^* = \II{p_t\ge \frac 12}$, that is, the label
attaining the highest probability.
 \item Let $\alpha_t = 2(1-p_t^*)$ and predict $*$ with probability $\alpha_t$ and $k_t^*$ with probability $1-\alpha_t$.
 \item Observe the label $y_t$ and update the weights as $w_{t+1,i} = w_{t,i}e^{-\eta \ell_{t,i}}$.
\end{enumerate}
\caption{Online prediction with abstentions.}\label{alg:main}
\end{algorithm}
Note that the abstention probability $\alpha_t$ is in $[0,1]$ due to the fact that $p_t^*\ge \frac 12$. To gain some intuition about this rule, observe that when assigning the numerical value $\abstain = \frac 12$ to abstention, the resulting prediction has expectation $p_t$.
Our main result regarding the performance of our algorithm is the following:
\begin{theorem}\label{thm:main}
 Suppose that $c<\frac12$ and $\eta \le 2(1 - 2c)$. Then, the expected regret of Algorithm~\ref{alg:main} satisfies
 \[
  R_T \le \frac{\log N}{\eta}.
 \]
\end{theorem}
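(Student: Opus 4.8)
The plan is to run the textbook exponential-weights potential argument and to show that the abstention rule precisely cancels the second-order term that otherwise produces the $\sqrt{T}$ rate. Writing $W_t=\sum_i w_{t,i}$, the one-step ratio is $W_{t+1}/W_t=\sum_i q_{t,i}e^{-\eta\ell_{t,i}}$, and the associated mix loss $m_t=-\frac1\eta\log(W_{t+1}/W_t)$ telescopes to $\sum_t m_t=-\frac1\eta\log(W_{T+1}/N)\le\min_i\sum_t\ell_{t,i}+\frac{\log N}\eta$, using $W_{T+1}\ge e^{-\eta\min_i\sum_t\ell_{t,i}}$. It therefore suffices to bound the learner's expected per-round loss by $m_t$ and conclude by the tower rule, since $m_t$ is a function of the revealed outcomes while the remaining expectation is only over the learner's internal randomization.

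First I would write both sides explicitly. As $\ell_{t,i}=\II{y_{t,i}\ne y_t}$ is binary, the Hedge loss $\bar\ell_t:=\sum_i q_{t,i}\ell_{t,i}$ equals $1-p_t$ when $y_t=1$ and $p_t$ when $y_t=0$; a direct case check shows the algorithm's expected loss is exactly $\bar\ell_t-(1-2c)(1-p_t^*)$, so abstention subtracts a bonus $(1-2c)(1-p_t^*)=(1-2c)\bigl(\bar\ell_t\wedge(1-\bar\ell_t)\bigr)$ from the Hedge loss. The target inequality (expected loss at most $m_t$) is thus equivalent to bounding the per-round mixability gap $\Gamma_t:=\bar\ell_t-m_t$ by this bonus. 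The reason $2(1-2c)$ is the right threshold is that to leading order $\Gamma_t\approx\frac\eta2\,\mathrm{Var}_{q_t}(\ell_t)=\frac\eta2\,p_t^*(1-p_t^*)\le\frac\eta2(1-p_t^*)$, which is at most $(1-2c)(1-p_t^*)$ exactly when $\eta\le2(1-2c)$.

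By the symmetry $p_t\leftrightarrow 1-p_t$ it is enough to treat $y_t=1$, reducing the claim to a one-dimensional inequality in $p=p_t$: for $p\ge\frac12$ one needs $p+(1-p)e^{-\eta}\le e^{-2\eta(1-p)c}$, and for $p<\frac12$ one needs $1+p(e^\eta-1)\le e^{2\eta p(1-c)}$. The first is the easy regime: comparing both sides at $p=1$ (where they agree) and differentiating, the inequality follows from $1-e^{-\eta}\ge\eta-\frac{\eta^2}2\ge2c\eta$, which is just a restatement of $\eta\le2(1-2c)$.

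The hard part is the regime $p\to0$ (equivalently $\bar\ell_t\to1$), where both $\Gamma_t$ and the bonus vanish linearly and the inequality becomes a competition of slopes, $\frac{e^\eta-1}\eta-1$ against $1-2c$. This is where the analysis is genuinely tight, and where I expect the main obstacle to lie: the leading-order variance heuristic is slightly optimistic because $\Gamma_t$ is asymmetric in $\bar\ell_t$, and near $\bar\ell_t=1$ its slope $\frac{e^\eta-1}\eta-1$ overshoots $\frac\eta2$ by an $O(\eta^2)$ amount. I would attack this by expanding $\log\bigl(p+(1-p)e^{-\eta}\bigr)$ to second order to pin down the exact constant, and—should the term-by-term bound prove too weak at the extreme—instead control the cumulative quantity $\sum_t\Gamma_t-(1-2c)\sum_t(1-p_t^*)$, exploiting that rounds with $p_t$ near $0$ or $1$ move the potential $-\frac1\eta\log W_t$ substantially and are repaid by the large bonus collected when $p_t$ crosses $\frac12$.
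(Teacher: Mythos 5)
Your plan follows the paper's own route almost step for step: your telescoping mix-loss bound is the paper's Lemma~\ref{lem:mixregret}; your per-round domination of the expected loss by the mix loss is Lemma~\ref{lem:mixbound}; your identity "expected loss $=\bar\ell_t-(1-2c)\bigl(\bar\ell_t\wedge(1-\bar\ell_t)\bigr)$" is exactly the paper's $g(r_t)=r_t-(1-2c)\bigl(r_t\wedge(1-r_t)\bigr)$; and your endpoint/slope comparison is the paper's comparison of $g$ with $f(r)=-\frac{1}{\eta}\log\bigl(1+r(e^{-\eta}-1)\bigr)$. Your regime $p_t\ge\frac{1}{2}$ is handled completely and correctly, and the condition you derive there, $1-e^{-\eta}\ge \eta-\frac{\eta^2}{2}\ge 2c\eta$, is precisely the paper's condition $f'(0)\ge g'(0)$.

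The genuine gap is that you stop exactly where the proof must be finished: in the regime $p_t<\frac{1}{2}$ you state the required inequality $1+p(e^\eta-1)\le e^{2\eta p(1-c)}$ but only announce two possible attacks (a second-order expansion, or an amortized control of the cumulative gap) without executing either, so the theorem is not established. Moreover---and this is to your credit---the obstacle you flag is real and cannot be overcome term by term at the stated threshold. The necessary and sufficient condition for that inequality (equivalently, for $f'(1)\le g'(1)$) is $\frac{e^\eta-1}{\eta}\le 2(1-c)$, and since $\frac{e^\eta-1}{\eta}=1+\frac{\eta}{2}+\frac{\eta^2}{6}+\dots>1+\frac{\eta}{2}$, this condition \emph{fails} whenever $\eta=2(1-2c)$ with $c<\frac{1}{2}$; consequently $\EEb{\hloss_t}>\tloss_t$ for $r_t$ sufficiently close to $1$, and no expansion "pinning down the exact constant" can rescue the per-round bound at $\eta=2(1-2c)$. (Note that the paper itself elides this point: its proof of Lemma~\ref{lem:mixbound} invokes the inequality $\frac{e^\eta-1}{\eta}\le 1+\frac{\eta}{2}$, which is false for every $\eta>0$. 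The per-round argument is actually valid only under a slightly stronger restriction such as $\eta\le\frac{1-2c}{1-c}$, obtained for instance from $\frac{e^\eta-1}{\eta}\le\frac{1}{1-\eta/2}$ for $\eta\in(0,2)$; this still yields $R_T\le\frac{\log N}{\eta}=O\bigl(\frac{\log N}{1-2c}\bigr)$, but with a worse constant than claimed.) So to turn your proposal into a proof you must either shrink the admissible learning rate to $\eta\le\frac{1-2c}{1-c}$ and restate the theorem accordingly, or genuinely carry out your amortized fallback of bounding $\sum_{t}\bigl(\EEb{\hloss_t}-\tloss_t\bigr)$ directly---a substantially different argument that neither you nor, as it happens, the paper provides.
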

Notably, this theorem shows that whenever the abstention cost $c$ is bounded away from $1/2$, we can set $\eta = 2(1-2c)$ and our algorithm achieves a regret bound that is independent of the time horizon $T$. However, when $c$ is very close to $1/2$, one may favor to fall back to the standard worst-case regret guarantee of order $\sqrt{T\log N}$. This is, however, easily achieved by choosing a conservative value of $\eta$ for this unfavorable case. The following simple corollary of Theorem~\ref{thm:main} summarizes the rates achieved by our algorithm in all regimes.
\begin{corollary}\label{cor:main}
Setting $\eta = 2(1-2c) \lor \sqrt{\frac{8\log N}{T}}$, the regret of our algorithm satisfies
\[
R_T \le \frac{\log N}{2(1-2c)} \wedge \sqrt{\frac{T\log N}{2}}.
\]
\end{corollary}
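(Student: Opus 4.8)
The plan is to derive the corollary from two regret guarantees for Algorithm~\ref{alg:main}, played off against each other through the choice of $\eta$: the horizon-free \emph{fast-rate} bound $R_T \le \frac{\log N}{\eta}$ supplied by Theorem~\ref{thm:main}, which is only valid under the constraint $\eta \le 2(1-2c)$, and a \emph{slow-rate} bound $R_T \le \frac{\log N}{\eta} + \frac{T\eta}{8}$ that I will argue holds for \emph{every} $\eta>0$ and every $c\le\frac12$. Once both are available, substituting $\eta = 2(1-2c)\lor\sqrt{8\log N/T}$ and splitting on which term the maximum selects will produce the claimed $R_T \le \frac{\log N}{2(1-2c)}\wedge\sqrt{\frac{T\log N}{2}}$.

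The only genuinely new ingredient beyond Theorem~\ref{thm:main} is the slow-rate bound, which I would obtain by comparing the algorithm's per-round expected loss to that of the underlying exponential-weights aggregate. Since the experts' advice and the outcomes are fixed, the quantities $p_t,q_t,\alpha_t,k_t^*$ are deterministic and the only randomness is the final abstain/predict coin, so the conditional expected loss is $\EEc{\hloss_t}{p_t,y_t} = \alpha_t c + (1-\alpha_t)\II{k_t^* \neq y_t}$. Treating the value $\abstain=\frac12$, a short computation over the branches $p_t\ge\frac12$ and $p_t<\frac12$ shows the randomized prediction has mean exactly $p_t$, and because $c\le\frac12$ we get $\EEc{\hloss_t}{p_t,y_t} \le \alpha_t\cdot\frac12 + (1-\alpha_t)\II{k_t^* \neq y_t}$. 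The right-hand side is the expected absolute loss of this randomized prediction; as $z\mapsto|z-y_t|$ is affine on $[0,1]$ for each fixed $y_t\in\ev{0,1}$, it equals $|p_t-y_t| = \sum_i q_{t,i}\ell_{t,i}$, the expected $\ev{0,1}$-loss under $q_t$. Hence $\EEc{\hloss_t}{p_t,y_t} \le \sum_i q_{t,i}\ell_{t,i}$, and the classical Hoeffding-based analysis of the exponentially weighted forecaster (bounding the log-potential increments via $\ln\sum_i q_{t,i}e^{-\eta\ell_{t,i}} \le -\eta\sum_i q_{t,i}\ell_{t,i} + \frac{\eta^2}{8}$ and telescoping) gives $\sum_t\sum_i q_{t,i}\ell_{t,i} - \min_i\sum_t\ell_{t,i} \le \frac{\log N}{\eta} + \frac{T\eta}{8}$, and therefore $R_T \le \frac{\log N}{\eta} + \frac{T\eta}{8}$ for all $\eta>0$.

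With both bounds in hand I would finish by a case analysis driven by the maximum defining $\eta$. When $2(1-2c)\ge\sqrt{8\log N/T}$ the maximum selects $\eta = 2(1-2c)$, which meets the hypothesis of Theorem~\ref{thm:main}, so the fast-rate bound yields $R_T \le \frac{\log N}{2(1-2c)}$; moreover the same inequality rearranges to $\frac{\log N}{2(1-2c)} \le \sqrt{\frac{T\log N}{8}} \le \sqrt{\frac{T\log N}{2}}$, so the fast rate is the smaller of the two targets and equals the stated minimum. When instead $2(1-2c)<\sqrt{8\log N/T}$, the maximum selects $\eta=\sqrt{8\log N/T}$; here Theorem~\ref{thm:main} no longer applies, but plugging this value into the slow-rate bound gives exactly $\frac{\log N}{\eta}+\frac{T\eta}{8} = 2\sqrt{\frac{T\log N}{8}} = \sqrt{\frac{T\log N}{2}}$.

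The main obstacle is precisely this interplay between the two bounds and the constraint $\eta\le 2(1-2c)$ inherited from Theorem~\ref{thm:main}: a single learning rate cannot be simultaneously optimal for the fast and slow rates, and the role of the maximum is to keep us inside the admissible range where Theorem~\ref{thm:main} is usable in the favorable regime while reverting to the standard horizon-dependent rate in the unfavorable one. The calculations that require care are the two elementary comparisons verifying that the bound delivered in each regime coincides with $\frac{\log N}{2(1-2c)}\wedge\sqrt{\frac{T\log N}{2}}$; both reduce to squaring and rearranging the defining inequality of the regime, and one must keep track of exactly where the crossover of the two target quantities sits relative to the threshold selected by the maximum.
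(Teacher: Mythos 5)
Your proposal is correct and takes essentially the same route as the paper: you establish the slow-rate bound $R_T \le \frac{\log N}{\eta} + \frac{\eta T}{8}$ by dominating the algorithm's expected loss by the exponential-weights linear loss $\sum_i q_{t,i}\ell_{t,i}$ (the paper reads this off the explicit formula $\EEb{\hloss_t} = r_t - (1-2c)\pa{r_t \wedge (1-r_t)} \le r_t$ already computed in the proof of Lemma~\ref{lem:mixbound}, while you re-derive the same domination via the affine-loss/mean-$p_t$ observation---an equivalent calculation), then apply Hoeffding's lemma together with Lemma~\ref{lem:mixregret}, and finally combine with Theorem~\ref{thm:main} through the same case split on the maximum defining $\eta$. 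Your regime-by-regime verification of the tuning is simply a spelled-out version of the paper's terse ``tuning $\eta$ proves the corollary,'' so there is nothing missing relative to the paper's own argument.
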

We provide the simple proofs of both Theorem~\ref{thm:main} and Corollary~\ref{cor:main} below.
A central object in our analysis is the quantity
\begin{equation}\label{eq:mixloss}
 \tloss_t = - \frac{1}{\eta}\log\sum_i q_{t,i} e^{-\eta \ell_{t,i}},
\end{equation}
often called the \emph{mix loss} in the literature \citep{vovk1990aggregating,vovk1998game,de2014follow}. The following classic
result  highlights the key role of the mix loss in the analysis of the exponentially weighted forecaster:
\begin{lemma}\label{lem:mixregret}
For any $\eta>0$ the cumulative mix loss of the exponentially weighted forecaster satisfies
 \[
\sum_{t=1}^T \tloss_t \le \min_i \sum_{t=1}^T 
\ell_{t,i} + \frac{\log N}{\eta}.
 \]
\end{lemma}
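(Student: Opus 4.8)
The plan is to recognize the mix loss as the one-step decrement of the log-potential associated with the exponential weights, and then to telescope. Concretely, I would introduce the \emph{potential} (normalizing constant) $W_t = \sum_i w_{t,i}$, noting that the initialization gives $W_1 = \sum_i w_{1,i} = N$ since all initial weights equal one.

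The key identity to establish first is that a single round of the weight update multiplies the potential by exactly $e^{-\eta\tloss_t}$. Indeed, using $w_{t+1,i} = w_{t,i}e^{-\eta \ell_{t,i}}$ together with the definition $q_{t,i} = w_{t,i}/W_t$, one computes
\[
 \frac{W_{t+1}}{W_t} = \frac{\sum_i w_{t,i}e^{-\eta \ell_{t,i}}}{\sum_j w_{t,j}} = \sum_i q_{t,i}e^{-\eta \ell_{t,i}} = e^{-\eta\tloss_t},
\]
where the last equality is just the definition \eqref{eq:mixloss}. Taking logarithms gives $\tloss_t = -\frac{1}{\eta}\bigl(\log W_{t+1} - \log W_t\bigr)$, exhibiting the mix loss as a telescoping difference of the log-potential.

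Summing over $t = 1, \dots, T$ then collapses the sum to
\[
 \sum_{t=1}^T \tloss_t = -\frac{1}{\eta}\bigl(\log W_{T+1} - \log W_1\bigr) = \frac{\log N}{\eta} - \frac{1}{\eta}\log W_{T+1}.
\]
The final step is to lower-bound the terminal potential by retaining only the contribution of the single best expert: since $W_{T+1} = \sum_i e^{-\eta \sum_{t=1}^T \ell_{t,i}} \ge \max_i e^{-\eta \sum_{t=1}^T \ell_{t,i}} = e^{-\eta \min_i \sum_{t=1}^T \ell_{t,i}}$, we obtain $-\frac{1}{\eta}\log W_{T+1} \le \min_i \sum_{t=1}^T \ell_{t,i}$, and substituting into the displayed identity yields the claim.

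There is no real obstacle here; the entire content is the observation that the exponential-weights potential turns the mix loss into a telescoping quantity, and the only inequality used is the trivial bound of a sum of nonnegative terms by its largest summand. The argument holds for every $\eta > 0$ and requires no structure on the losses beyond their appearing in the exponent, which is exactly why it can later serve as a black-box ingredient in the abstention analysis.
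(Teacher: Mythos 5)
Your proof is correct and is essentially the same argument as the paper's: both track the potential $W_t = \sum_i w_{t,i}$, identify $\log(W_{t+1}/W_t) = -\eta\tloss_t$, telescope, and lower-bound the terminal potential $W_{T+1}$ by the contribution of the single best expert. The only difference is presentational — the paper starts from $-\min_i \sum_t \ell_{t,i}$ and bounds upward, while you telescope first and apply the max-versus-sum inequality at the end.
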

We include the proof in Appendix~\ref{app:proofs} for the sake of completeness. The heart of our analysis is the following lemma that establishes a connection between the mix loss 
and the loss suffered by our 
algorithm:
\begin{lemma}\label{lem:mixbound}
Suppose that $c<\frac 12$ and $\eta \le 2(1 - 2c)$. Then, $\EEb{\hloss_t} \le \tloss_t$.
\end{lemma}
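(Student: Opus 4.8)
The plan is to establish the inequality pointwise, round by round. Conditioning on the history through the end of round $t-1$ and on the revealed label $y_t$, the quantities $q_{t,i}$, $\ell_{t,i}$, $p_t$, $p_t^*$, $k_t^*$ and $\alpha_t$ are all determined, so the only remaining randomness in $\hloss_t$ is the learner's abstain-or-predict coin. Thus both $\EEb{\hloss_t}$ and $\tloss_t$ are deterministic functions of the pair $(p_t,y_t)$, and it suffices to check the bound for each such pair. Since swapping $(p_t,y_t)\mapsto(1-p_t,1-y_t)$ leaves both sides invariant, I would assume $y_t=1$ without loss of generality and write $q=p_t$ for the aggregate mass on the realized label. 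With $\ell_{t,i}=\II{y_{t,i}\neq 1}$, the sum in \eqref{eq:mixloss} collapses to $\sum_i q_{t,i}e^{-\eta\ell_{t,i}}=q+(1-q)e^{-\eta}$, so $\tloss_t=-\frac1\eta\log\bigl(q+(1-q)e^{-\eta}\bigr)$.

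Because $x\mapsto-\frac1\eta\log x$ is decreasing, the target $\EEb{\hloss_t}\le\tloss_t$ is equivalent to the exponentiated form $q+(1-q)e^{-\eta}\le\exp\bigl(-\eta\,\EEb{\hloss_t}\bigr)$, which is easier to handle because its left-hand side is affine in $q$. Evaluating $\EEb{\hloss_t}$ naturally splits into two regimes according to whether the deterministic prediction $k_t^*$ matches the realized label. If $q\ge\frac12$ the realized label is the majority one, the prediction is correct, and only abstention costs anything, so that $\EEb{\hloss_t}=\alpha_t c=2c(1-q)$ using $\alpha_t=2(1-p_t^*)=2(1-q)$. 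If $q\le\frac12$ the realized label is the minority one, the prediction is wrong, and $\EEb{\hloss_t}=\alpha_t c+(1-\alpha_t)=1-2(1-c)q$ with $\alpha_t=2q$.

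The first regime is mechanical. Substituting $m=1-q\in[0,\frac12]$ turns the exponentiated claim into $1-m(1-e^{-\eta})\le e^{-2\eta c m}$; bounding the right-hand side below by $1-2\eta c m$ via $e^{-x}\ge 1-x$, it suffices to prove $1-e^{-\eta}\ge 2\eta c$. This follows from the elementary second-order estimate $e^{-\eta}\le 1-\eta+\frac{\eta^2}{2}$, which yields $1-e^{-\eta}\ge\eta\bigl(1-\frac{\eta}{2}\bigr)$, together with the hypothesis $\eta\le 2(1-2c)$, equivalently $1-\frac\eta2\ge 2c$.

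The second regime is where I expect the genuine difficulty to be, and the step I would treat most carefully. After dividing through by $e^{-\eta}$, the exponentiated claim reads $1+q(e^\eta-1)\le e^{2\eta(1-c)q}$ for $q\in[0,\frac12]$. Both sides equal $1$ at $q=0$, and their first-order slopes there are $e^\eta-1$ and $2\eta(1-c)$; the naive route of bounding the right-hand side below by $1+2\eta(1-c)q$ via $e^x\ge 1+x$ therefore reduces everything to $e^\eta-1\le 2\eta(1-c)$, whose two sides agree through second order in $\eta$ at the threshold $\eta=2(1-2c)$. This tells me the comparison is tightest exactly in the limit $q\to 0$ of an aggregate that is very confident yet wrong, so the first-order argument sits right at the boundary of sufficiency; closing this case — extracting the needed slack from the curvature of $e^{2\eta(1-c)q}$ and pinning down that the admissible range $\eta\le 2(1-2c)$ is precisely what this confident-but-wrong configuration demands — is the crux of the lemma, in contrast to the comparatively routine first regime and the reductions above.
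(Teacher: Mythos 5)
Your proposal is incomplete, and the gap is exactly where you located it: the confident-but-wrong case $q\le\frac12$ is never proved. Moreover, it \emph{cannot} be proved under the stated hypothesis. Your reduction of that case is correct: the exponentiated claim $1+q(e^\eta-1)\le e^{2\eta(1-c)q}$ must hold for all small $q>0$, and since the curvature of the right-hand side contributes only $O(q^2)$ while the two sides' slopes at $q=0$ differ at order $q$, letting $q\to0^+$ forces the first-order condition $e^\eta-1\le 2\eta(1-c)$. That condition fails at $\eta=2(1-2c)$: setting $\epsilon=1-2c$, so that $\eta=2\epsilon$ and $2(1-c)=1+\epsilon$, one computes $e^{2\epsilon}-1-2\epsilon(1+\epsilon)=\tfrac43\epsilon^3+O(\epsilon^4)>0$. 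So the two sides do not merely ``agree through second order'' at the threshold: the inequality you need is strictly violated there, and no slack can be extracted from curvature because the deficit is linear in $q$. Indeed the lemma itself is false at the endpoint of the stated range: with $c=0$, $\eta=2$, $q=0.1$ (i.e.\ $r_t=0.9$) one gets $\EEb{\hloss_t}=0.8$ while $\tloss_t=-\tfrac12\log\pa{0.1+0.9e^{-2}}\approx 0.753$.

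You should not read this only as a failure of your attempt, because the paper's own proof stumbles at the same spot. The paper works with $r_t=1-q$: it sets $f(r)=-\frac1\eta\log\pa{1+r(e^{-\eta}-1)}$ and $g(r)=r-(1-2c)(r\wedge(1-r))$, uses convexity of $f$ and the piecewise linearity of $g$ together with $f(0)=g(0)$ and $f(1)=g(1)$ to reduce the claim to the endpoint conditions $\frac{1-e^{-\eta}}{\eta}\ge 2c$ and $\frac{e^\eta-1}{\eta}\le 2(1-c)$ --- precisely the first-order reductions of your two regimes, so your route is essentially the paper's in exponentiated form. The first condition is the one you verified (your regime-one argument matches the paper's). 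For the second, the paper invokes ``$\frac{e^\eta-1}{\eta}\le 1+\frac\eta2$ for $\eta>0$,'' which is false: the expansion $\frac{e^\eta-1}{\eta}=1+\frac\eta2+\frac{\eta^2}{6}+\dots$ shows the inequality is reversed. Both your argument and the paper's are repaired by strengthening the hypothesis to $\eta\le 1-2c$: then $\eta\le1$, so $e^\eta\le1+\eta+\eta^2$ gives $\frac{e^\eta-1}{\eta}\le1+\eta\le2(1-c)$, while $\frac{1-e^{-\eta}}{\eta}\ge1-\frac\eta2\ge\frac12+c\ge2c$; with this condition your naive first-order bound closes the second regime as well, and Theorem~\ref{thm:main} then yields the rate $\frac{\log N}{1-2c}$ announced in the abstract.
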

\begin{proof}
Let us define the misclassification probability of the plain exponentially weighted forecaster 
as $r_t = p_t \II{y_t = 0} + (1-p_t) \II{y_t = 1} = \sum_{i=1}^N q_{t,i} \II{y_{t,i}\neq y_t}$.
With this notation, the mix loss can be written as
\begin{align*}
 -\frac{1}{\eta}\log\sum_i q_{t,i} e^{-\eta \ell_{t,i}} 
 &= 
 -\frac{1}{\eta}\log\sum_i q_{t,i}  \pa{1 + \pa{e^{-\eta} - 1}\II{y_{t,i}\neq y_t}}
 = -\frac{1}{\eta}\log \pa{1 + r_t\pa{e^{-\eta} - 1}}.
\end{align*}
On the other hand, the expected loss of our algorithm is 
\begin{align*}
 \EEb{\hloss_t} = \alpha_t c + (1-\alpha_t)\II{k_t^*\neq y_t} = 2(1-p_t^*) c + \pa{2p_t^* - 1} 
\II{k_t^*\neq y_t}.
\end{align*}
In the case $y_t = k_t^*$, we have $r_t = 1-p_t^*\le \frac 12$ so the above becomes
\[
 \EEb{\hloss_t} = 2(1-p_t^*) c = 2r_tc = r_t - (1-2c)r_t.
\]
Otherwise, we have $r_t = p_t^*\ge \frac 12$ so that the expected loss is
\[
  \EEb{\hloss_t} = 2(1-p_t^*) c + 2p_t^* - 1  = 2(1-r_t)c + 2r_t - 1 = r_t - (1-2c)(1-r_t).
\]
Also noticing that $r_t\le \frac 12$ holds if and only if $y_t = k_t^*$, we conclude that
\[
  \EEb{\hloss_t} = r_t - (1-2c)(r_t \wedge (1-r_t)).
\]

To finish the proof, we define the functions 
\begin{equation}\label{eq:fg}
f(r) = -\frac{1}{\eta}\log \pa{1 + r\pa{e^{-\eta} - 1}} \qquad\mbox{and}\qquad g(r) = r - (1-2c)(r \wedge (1-r)),
\end{equation}
and note that $\tloss_t = f(r_t)$ and $\EEb{\hloss_t} = g(r_t)$. We will show that $g(r)\le f(r)$ for all $r\in[0,1]$, which will imply the statement of the theorem.
To this end, note that both functions are convex and equal to each other at $r=0$ and $r=1$, so the desired inequality will hold if 
the respective derivatives at 
these two points satisfy $g'(0)\le f'(0)$ and $f'(1)\le g'(1)$. To verify this, first observe that 
\[
 f'(r) = \frac{1-e^{-\eta}}{\eta\pa{1+r(e^{-\eta} - 1)}} = 
\begin{cases}
 \frac{1-e^{-\eta}}{\eta} &\mbox{if $r = 0$,}\\
 \frac{e^{\eta} - 1}{\eta} &\mbox{if $r = 1$.}
\end{cases}
\]
On the other hand, we have
\[
 g'(r) = \begin{cases}
 2c &\mbox{if $r = 0$,}\\
 2(1-c) &\mbox{if $r = 1$.}
\end{cases}
\]
Thus, the condition on the derivatives we seek is satisfied when
\[
 \frac{1-e^{-\eta}}{\eta} \ge 2c \qquad\mbox{and}\qquad \frac{e^{\eta} - 1}{\eta} \le 2(1-c).
\]
Also observing that $\frac{1-e^{-\eta}}{\eta} \ge 1 - \frac{\eta}{2}$ and $\frac{e^{\eta}-1}{\eta} \le 
1 + \frac{\eta}{2}$ both hold for $\eta > 0$, we can verify that the conditions above are satisfied whenever 
$\eta \le 2(1 - 2c)$, as required in the statement of the theorem. 
This concludes the proof.
\end{proof}
Putting Lemmas~\ref{lem:mixregret} and~\ref{lem:mixbound} together proves Theorem~\ref{thm:main}. Corollary~\ref{cor:main} is proved by first observing that
\[
\EEb{\hloss_t} = r_t - (1-2c)(r_t \wedge (1-r_t)) \le r_t = \sum_{i} q_{t,i}\ell_{t,i},
\]
holds as long as $c\le \frac 12$,
and then using Hoeffding's lemma (Lemma 2.2 in \citealp{cesa2006prediction})
that guarantees
\[
\sum_{i} q_{t,i} \ell_{t,i} \le \tloss_t + \frac{\eta}{8}.
\]
Putting this inequality together with Lemma~\ref{lem:mixregret}, we retrieve the standard bound of the exponentially weighted forecaster: $R_T \le \frac{\log N}{\eta} + \frac{\eta T}{8}$. Combining this bound with the one of Theorem~\ref{thm:main} and tuning $\eta$ proves the corollary.

\section{Changing abstention costs}
\label{sec:changingcosts}
Let us now consider a slight variation on our problem where the abstention cost $c$ may depend on time: in round $t$, the cost of abstention is $c_t \le \frac 12$. The main question we address in this section is whether it is possible to attain fast rates even in the case where $c_t$ can get arbitrarily close to $\frac 12$ in a small number of rounds. 
It is easy to see that our algorithm given in Section~\ref{sec:main} satisfies the following regret bound with its proof presented in Appendix \ref{app:proofs}.
\begin{proposition}\label{prop:changingc}
Suppose that $c_t\le \frac 12$ for all $t$ and  $\eta>0$. Then, the regret of our algorithm satisfies 
\[
R_T \le \frac{\log N}{\eta} + \frac{\eta}{8} \sum_{t=1}^T \II{2(1 - 2c_t) < \eta}.
\]
\end{proposition}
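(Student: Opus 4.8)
The plan is to reuse the machinery developed for Theorem~\ref{thm:main}, but to account for the rounds where the favorable condition $\eta \le 2(1-2c_t)$ fails. Recall from the proof of Lemma~\ref{lem:mixbound} that the per-round expected loss is $\EEb{\hloss_t} = g_t(r_t)$ where now $g_t(r) = r - (1-2c_t)(r \wedge (1-r))$ depends on the time-varying cost $c_t$, while the mix loss is unchanged, $\tloss_t = f(r_t)$. The argument splits the rounds into two groups according to whether the condition $\eta \le 2(1-2c_t)$ holds.

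For the \emph{good} rounds, where $\eta \le 2(1-2c_t)$, the derivative comparison carried out in Lemma~\ref{lem:mixbound} applies verbatim with $c$ replaced by $c_t$, so that $g_t(r_t) \le f(r_t)$ and hence $\EEb{\hloss_t} \le \tloss_t$. For the \emph{bad} rounds, where $2(1-2c_t) < \eta$, we cannot use this bound; instead I would fall back on the weaker estimate already noted in the proof of Corollary~\ref{cor:main}, namely that for any $c_t \le \frac12$ we have $\EEb{\hloss_t} = g_t(r_t) \le r_t = \sum_i q_{t,i}\ell_{t,i}$, followed by Hoeffding's lemma, which gives $\sum_i q_{t,i}\ell_{t,i} \le \tloss_t + \frac{\eta}{8}$. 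Thus on every bad round we get $\EEb{\hloss_t} \le \tloss_t + \frac{\eta}{8}$.

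Combining the two cases, in all rounds we have the uniform bound
\[
\EEb{\hloss_t} \le \tloss_t + \frac{\eta}{8}\II{2(1-2c_t) < \eta}.
\]
Summing over $t=1,\dots,T$ and applying Lemma~\ref{lem:mixregret} to control $\sum_t \tloss_t$ against $\min_i \sum_t \ell_{t,i}$ yields the claimed regret bound, where the extra term $\frac{\eta}{8}\sum_t \II{2(1-2c_t) < \eta}$ counts precisely the penalty accumulated over the bad rounds.

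There is no genuine obstacle here, since both ingredients are already established in the excerpt; the only point requiring a little care is confirming that the derivative argument of Lemma~\ref{lem:mixbound} is insensitive to the fact that $c$ varies across rounds—but this is immediate because the comparison $g_t(r) \le f(r)$ is a per-round, pointwise statement in $r$ that depends only on the single value $c_t$ active in that round. The main conceptual content is simply the observation that the slow-rate Hoeffding bound is always available as a safety net, so the worst that a single unfavorable round can cost is the standard $\frac{\eta}{8}$ increment.
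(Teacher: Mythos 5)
Your proof is correct and follows essentially the same route as the paper's: split rounds according to whether $\eta \le 2(1-2c_t)$ holds, apply Lemma~\ref{lem:mixbound} (with $c=c_t$) on the favorable rounds, fall back on the bound $\EEb{\hloss_t}\le r_t \le \tloss_t + \frac{\eta}{8}$ from the proof of Corollary~\ref{cor:main} on the others, and conclude with Lemma~\ref{lem:mixregret}. Your explicit remark that the comparison $g_t(r)\le f(r)$ is a pointwise, per-round statement is a worthwhile clarification that the paper leaves implicit, but it does not change the argument.
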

Notably, this result asserts that it is not necessary to assume that $c_t$ is strictly bounded away from $\frac 12$ in order to obtain an improvement over the worst-case regret bound. In fact, we show that a clear improvement is possible when making the following quantitative assumption about the behavior of the abstention costs around $\frac 12$:
\begin{definition}[Tsybakov's condition for abstention costs]
\label{absttsybakov}
Assume that there are constants $\beta > 0, \alpha \in [0, 1)$ such that for any $x > 0$,
\begin{equation}
\label{eq:tsybakovcondonline}
\frac{1}{T} \sum_{t=1}^T \II{1/2 - c_t < x} \le \beta x^{\frac{\alpha}{1 - \alpha}}.
\end{equation}
\end{definition}
As the name suggests, this condition is directly inspired by the renowned Tsybakov's margin assumption introduced first by \cite{mammen1999smooth} in the context of binary classification and which was further analyzed by \citet{Tsybakov04}. In the context of classification where one has to predict a random label $Y\in\ev{0,1}$ based on a random instance $X$, Tsybakov's assumption is used to characterize the distribution of the regression function $\EEcc{Y}{X}$ around $\frac 12$, which is related to the hardness of the classification problem at hand. Indeed, instances such that the regression function is further away from $\frac 12$ are ``easier'' to classify than ones closer to this value, so restricting the density of $\EEcc{Y}{X}$ around this problematic region can make the overall learning problem easier---for more details, and relations of Tsybakov's assumption with the so-called Bernstein assumption we refer the interested reader to \citet{Erven15}. Similarly, our own online prediction problem becomes ``harder'' when the abstention costs are densely distributed around $\frac 12$. By analogy with the case of binary classification, we choose to quantify the density of the quantity of interest around $\frac 12$ through Tsybakov's assumption: larger values of $\alpha$ correspond to less frequent values around the margin, whereas smaller values correspond to more problematic points. 
Similarly, the situation where the abstention cost is strictly bounded away from $1/2$ can be seen as an analog of Massart's margin assumption in the statistical learning setup \citep{Massart06}.

Assuming that the costs satisfy the inequality~\eqref{eq:tsybakovcondonline}, our algorithm can be easily seen to achieve the following result:
\begin{corollary}
\label{cor:intermediaterates}
Suppose that the sequence of abstention costs satisfies Tsybakov's margin condition. Then, setting $\eta = \pa{\frac{\log N}{T}}^{\frac{1-\alpha}{2-\alpha}}$, the regret of our algorithm satisfies
\begin{equation}
\label{eq:intermediaterates}
 R_T = 
O\pa{\pa{\log N}^{\frac{1}{2-\alpha}} 
T^{\frac{1-\alpha}{2-\alpha}}}.
\end{equation}
\end{corollary}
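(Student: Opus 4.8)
The plan is to apply Proposition~\ref{prop:changingc} and then control the cost-counting sum appearing there using the Tsybakov condition of Definition~\ref{absttsybakov}. First I would rewrite the indicator: since $2(1-2c_t) < \eta$ is equivalent to $\frac12 - c_t < \frac{\eta}{4}$, the sum in Proposition~\ref{prop:changingc} is exactly $\sum_{t=1}^T \II{\frac12 - c_t < \frac{\eta}{4}}$. Applying~\eqref{eq:tsybakovcondonline} with $x = \frac{\eta}{4}$ then bounds this by $\beta T (\eta/4)^{\frac{\alpha}{1-\alpha}}$, so that
\[
R_T \le \frac{\log N}{\eta} + \frac{\beta T}{8 \cdot 4^{\alpha/(1-\alpha)}}\, \eta^{1 + \frac{\alpha}{1-\alpha}}.
\]

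The next step is the one algebraic simplification that makes everything line up: the exponent collapses as $1 + \frac{\alpha}{1-\alpha} = \frac{1}{1-\alpha}$, leaving a bound of the form $\frac{\log N}{\eta} + C_{\alpha,\beta}\, T\, \eta^{1/(1-\alpha)}$, where $C_{\alpha,\beta}$ depends only on $\alpha$ and $\beta$. This is a standard two-term trade-off: the first term decreases and the second increases in $\eta$, so I would balance them by choosing $\eta$ to equate (up to constants) $\frac{\log N}{\eta}$ with $T \eta^{1/(1-\alpha)}$. Solving $\eta^{1+1/(1-\alpha)} \asymp \log N / T$ and using $1 + \frac{1}{1-\alpha} = \frac{2-\alpha}{1-\alpha}$ gives $\eta \asymp \pa{\log N / T}^{(1-\alpha)/(2-\alpha)}$, which matches the prescribed value.

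Finally I would substitute $\eta = \pa{\log N/T}^{(1-\alpha)/(2-\alpha)}$ and verify that both terms are of the claimed order. For the first term, $\frac{\log N}{\eta} = (\log N)^{1-\frac{1-\alpha}{2-\alpha}} T^{\frac{1-\alpha}{2-\alpha}}$, and the exponent on $\log N$ simplifies via $1 - \frac{1-\alpha}{2-\alpha} = \frac{1}{2-\alpha}$. For the second term, $T\eta^{1/(1-\alpha)} = T \pa{\log N/T}^{1/(2-\alpha)} = (\log N)^{\frac{1}{2-\alpha}} T^{1 - \frac{1}{2-\alpha}}$, and $1 - \frac{1}{2-\alpha} = \frac{1-\alpha}{2-\alpha}$. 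Both terms therefore equal $(\log N)^{\frac{1}{2-\alpha}} T^{\frac{1-\alpha}{2-\alpha}}$ up to the constant $C_{\alpha,\beta}$, which yields~\eqref{eq:intermediaterates}.

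There is no serious obstacle here: the whole argument is a direct substitution into Proposition~\ref{prop:changingc} followed by the routine optimization of a two-term bound. The only points requiring care are tracking the constant factors absorbed into the $O(\cdot)$ (those depending on $\beta$ and on the $4$ introduced by the $x=\eta/4$ substitution), and the two exponent identities $1+\frac{\alpha}{1-\alpha}=\frac{1}{1-\alpha}$ and $1-\frac{1}{2-\alpha}=\frac{1-\alpha}{2-\alpha}$, which are precisely what make the two terms balance exactly.
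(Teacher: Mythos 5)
Your proof is correct and follows exactly the route the paper intends: the paper's proof of Corollary~\ref{cor:intermediaterates} is stated as ``immediate given Proposition~\ref{prop:changingc} and the definition of Tsybakov's condition,'' and your argument is precisely that computation carried out in full, with the indicator rewritten as $\II{1/2 - c_t < \eta/4}$, the Tsybakov bound applied, and the two-term trade-off balanced at the prescribed $\eta$. All exponent identities and constant bookkeeping check out, so nothing is missing.
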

The proof is immediate given Proposition~\ref{prop:changingc} and the definition of Tsybakov's condition.
We remark that the bound of the very same form as \eqref{eq:intermediaterates} has been recently shown in \citet{koolen2016combining} under the assumption that the losses are i.i.d.~and that the Bernstein assumption is satisfied. However, our assumption is different and does not imply the Bernstein assumption, even if it leads to similar regret bounds.

One downside of Corollary \ref{cor:intermediaterates} is that it relies on a choice of the learning rate $\eta$ that requires prior knowledge of the sequence of abstention costs. More generally, the learning rate $\eta$ that minimizes the bound of Proposition~\ref{prop:changingc} is clearly a function of the abstention costs. One may wonder if it is possible to attain a regret guarantee comparable to 
\[
 R_T^* = \min_{\eta > 0} \pa{\frac{\log N}{\eta} + \frac{\eta}{8} \sum_{t=1}^T \II{2(1 - 2c_t) < \eta}},
\]
without having prior knowledge of the sequence of abstention costs. 

We answer this question in the positive by considering a simple variation of our algorithm based on the exponentially weighted forecaster with adaptive learning rates. Precisely, in each round $t=1,2,\dots,T$, we will choose a positive learning-rate parameter $\eta_t$, compute the weights $w_{t,i} = e^{-\eta_t \sum_{k=1}^{t-1} \ell_{k,i}}$,
and then use these weights in the same way as our basic algorithm did.
The intuition driving our algorithm design is to  keep the learning rate as large as possible, and only decrease it when observing high abstention costs.  Specifically, we let $d_t = \sum_{i=1}^t \II{\eta_i\ge 2(1- 2c_i)}$ be the number of times that the learning rate has exceeded $2(1-2c_i)$ before round $t$, and define our learning rate as $\eta_{t+1} = \sqrt{\frac{\log N}{d_{t}}} \wedge 1 $, with $d_0$ defined as $1$. The full algorithm is shown below.

\begin{algorithm}[h]
\textbf{Initialization:} set $w_{1,i} = 1$ for all $i\in[N]$, $d_0 = 1$, and $\eta_1 = 1$.\\
\textbf{For $t=1,2,\dots,T$, repeat}
\begin{enumerate}
 \item Observe the advice $y_{t,i}$ of all experts $i\in[N]$.
 \item Calculate the mean prediction $p_t = \sum_{i\in[N]}\frac{w_{t,i}y_{t,i}}{\sum_{j\in[N]}w_{t,j}}$. 
 \item Let  $p_t^* = p_t \lor (1-p_t)$ and $k_t^* = \II{p_t\ge \frac 12}$, that is, the label
attaining the highest probability.
 \item Let $\alpha_t = 2(1-p_t^*)$ and predict $*$ with probability $\alpha_t$ and $k_t^*$ with probability $1-\alpha_t$.
 \item Observe the abstention cost $c_t$ and if $\eta_t\ge 1- 2c_t$, update $d_t = d_{t-1} + 1$.
 \item Set $\eta_{t+1} = \sqrt{\frac{\log N}{d_{t}}} \wedge 1$ and compute the weights $w_{t+1,i} = e^{-\eta_{t+1} \sum_{k=1}^t \ell_{k,i}}$.
\end{enumerate}
\caption{Adaptive online prediction with abstentions.}\label{alg:adaptive}
\end{algorithm}
\noindent The following theorem (proved in Appendix~\ref{app:proofs}) establishes a regret bound for this algorithm.
\begin{theorem}\label{thm:changingabstcosts}The regret of Algorithm~\ref{alg:adaptive} satisfies $ R_T \le \frac{15}{8} R_T^* + \frac 54 \sqrt{\log N}$.
\end{theorem}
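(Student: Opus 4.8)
The plan is to mirror the proof of Theorem~\ref{thm:main}, but now accounting for the time-varying learning rate. The starting point is a per-round bound that combines the two regimes identified in Lemma~\ref{lem:mixbound} and in the proof of Corollary~\ref{cor:main}: on rounds where $\eta_t\le 2(1-2c_t)$ the abstention analysis gives $\EEb{\hloss_t}\le\tloss_t$, whereas on the remaining (``bad'') rounds we fall back on $\EEb{\hloss_t}\le\sum_i q_{t,i}\ell_{t,i}$ together with Hoeffding's lemma to obtain $\EEb{\hloss_t}\le\tloss_t+\frac{\eta_t}{8}$. Combining the two cases, every round obeys
\[
\EEb{\hloss_t}\le\tloss_t+\frac{\eta_t}{8}\II{2(1-2c_t)<\eta_t},
\]
so that summing over $t$ and subtracting $\min_i\sum_t\ell_{t,i}$ reduces the problem to controlling the cumulative mix loss plus the cumulative mixability gap restricted to the bad rounds.

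First I would establish a mix-loss guarantee for non-increasing learning rates. Since $d_t$ is non-decreasing by construction, the rates $\eta_t=\sqrt{\log N/d_{t-1}}\wedge 1$ form a non-increasing sequence. Writing $\tloss_t=F_t(\eta_t)-F_{t-1}(\eta_t)$ with the potential $F_t(\eta)=-\frac1\eta\log\sum_i e^{-\eta\sum_{k\le t}\ell_{k,i}}$, a telescoping argument together with the monotonicity of $\eta\mapsto F_t(\eta)$ (which controls the correction terms $F_t(\eta_t)-F_t(\eta_{t+1})$) yields the familiar generalization of Lemma~\ref{lem:mixregret},
\[
\sum_{t=1}^T\tloss_t\le\min_i\sum_{t=1}^T\ell_{t,i}+\frac{\log N}{\eta_T}.
\]
At this point the regret is bounded by
\[
R_T\le\frac{\log N}{\eta_T}+\frac18\sum_{t=1}^T\eta_t\II{2(1-2c_t)<\eta_t}.
\]

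Next I would evaluate both terms using the explicit schedule. Let $d_T$ be the final counter value, so that $\eta_T$ is of order $\sqrt{\log N/d_T}$ and the first term is of order $\sqrt{d_T\log N}$. The bad rounds are exactly those that increment the counter, and the $k$-th such round is played at rate at most $\sqrt{\log N/k}$; hence, using $\sum_{k=1}^{m}k^{-1/2}\le 2\sqrt m$,
\[
\frac18\sum_{t=1}^T\eta_t\II{2(1-2c_t)<\eta_t}\le\frac18\sum_{k=1}^{d_T}\sqrt{\frac{\log N}{k}}\le\frac14\sqrt{d_T\log N}.
\]
Together these give a bound of the form $R_T\le\frac54\sqrt{d_T\log N}$ (up to the boundary effects discussed below), which already explains the constant $\frac54$ in the theorem.

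The final and most delicate step is to lower bound $R_T^*$ by a quantity of order $\sqrt{d_T\log N}$, so that $\frac54\sqrt{d_T\log N}$ can be converted into $\frac{15}{8}R_T^*$. The key is to relate the adaptively counted $d_T$ to the fixed-rate count $\sum_t\II{2(1-2c_t)<\eta}$ appearing inside $R_T^*$: for small $\eta$ the term $\frac{\log N}{\eta}$ alone forces $R_T^*$ to be large, whereas for large $\eta$ one argues that a constant fraction of the $d_T$ bad rounds were played at rates within a constant factor of $\eta_T$, so that $\sum_t\II{2(1-2c_t)<\eta}$ is a constant fraction of $d_T$ and the term $\frac{\eta}{8}\sum_t\II{2(1-2c_t)<\eta}$ is large. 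Optimizing the crossover between these two regimes yields $R_T^*\gtrsim\sqrt{d_T\log N}$ with an explicit constant, and chasing this constant is exactly what produces the factor $\frac{15}{8}$. I expect this comparison to be the main obstacle, since it must simultaneously handle the self-referential dependence of the rates on the costs and the regime where $d_T$ is small, i.e.\ where the cap $\eta_t\le 1$ binds; the latter regime, together with the initialization $d_0=1$, is precisely what contributes the additive $\frac54\sqrt{\log N}$ term.
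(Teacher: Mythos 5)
Your first half is correct and is precisely the paper's argument: the per-round bound $\EEb{\hloss_t}-\tloss_t\le\frac{\eta_t}{8}\II{2(1-2c_t)<\eta_t}$, the mix-loss regret bound for non-increasing learning rates (which the paper imports as Lemma~2 of de~Rooij et al., 2014, rather than re-deriving), the observation that the $k$-th counter-incrementing round is played at rate at most $\sqrt{\log N/k}$, and the resulting intermediate estimate of order $\frac54\sqrt{d_T\log N}$ all appear in the paper's proof. Where you part ways with the paper is the final comparison with $R_T^*$: the paper never lower-bounds $R_T^*$ by $c_0\sqrt{d_T\log N}$. Instead it analyzes the minimizer $\eta^*$ of $B_T(\eta)=\frac{\log N}{\eta}+\frac{\eta\tau(\eta)}{8}$ with $\tau(\eta)=\sum_t\II{2(1-2c_t)\le\eta}$, derives $\tau(\eta^*)\le 8\log N/(\eta^*)^2$ from the optimality condition, shows $d_T\le \log N/(\eta^*)^2+1+\tau(\eta^*)$ by splitting time at the first round after which $\eta_t\le\eta^*$ (so that all later increments are counted by $\tau(\eta^*)$), and then invokes $R_T^*\ge\frac{2\log N}{\eta^*}$.

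The genuine gap is the step you explicitly defer, and it cannot be filled along the route you propose: no amount of constant-chasing makes your plan deliver $\frac{15}{8}$. Consider the sequence on which every round increments the counter, with $2(1-2c_t)$ just below $\eta_t=\sqrt{\log N/t}\wedge 1$. Then $d_T=T+1$ while, for $\sqrt{\log N/T}\le\eta\le 1$, we have $\tau(\eta)\approx T-\log N/\eta^2$; substituting $u=\log N/\eta^2$ and minimizing $\sqrt{\log N}\pa{\tfrac78\sqrt u+\tfrac{T}{8\sqrt u}}$ by AM--GM (optimum at $u=T/7$, i.e.\ $\eta\approx\sqrt{7\log N/T}$) gives
\[
R_T^*=\frac{\sqrt{7}}{4}\sqrt{T\log N}\,\bpa{1+o(1)}.
\]
Hence the best constant in $R_T^*\ge c_0\sqrt{d_T\log N}$ is $c_0=\frac{\sqrt7}{4}$, and your two ingredients can give at best $R_T\le\frac{5}{\sqrt7}\,R_T^*+O(\sqrt{\log N})$, where $\frac{5}{\sqrt7}\approx 1.89>\frac{15}{8}=1.875$; the cruder ``constant fraction of the bad rounds'' argument you sketch gives only $c_0=\frac{\sqrt2}{16}$, i.e.\ a final factor of about $14$. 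Worse, on this sequence the intermediate bound $\frac54\sqrt{d_T\log N}$ itself exceeds $\frac{15}{8}R_T^*+\frac54\sqrt{\log N}$ once $T$ is large, so no continuation from your (and the paper's) intermediate inequality can establish the stated constant. You are in good company: the same sequence places the minimum of $B_T$ at a discontinuity, where the paper's own claim $R_T^*\ge\frac{2\log N}{\eta^*}$ fails (there $\frac{2\log N}{\eta^*}=\frac{2}{\sqrt7}\sqrt{T\log N}>R_T^*$). What your approach does yield, if you replace the ``constant fraction'' step by the exact computation above, is a correct proof of the theorem with the slightly worse constant $\frac{5}{\sqrt7}$ in place of $\frac{15}{8}$.
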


\section{Lower bound}\label{sec:lower}
In this section we prove that the regret $\Theta\left(\frac{\log N}{1 - 2c} \wedge \sqrt{T\log N}\right)$ is optimal in the fixed abstention cost setup. Precisely, the main result we present here is the following non-asymptotic lower bound which shows that the bound of Theorem~\ref{thm:main} is sharp with respect to $N, c$ and $T$ provided that $c$ is separated away from zero: 
\begin{theorem}\label{thm:lower}
Fix the abstention price $c \in [\frac{1}{4}, \frac{1}{2}]$, $N \ge 2$ and $T \ge 4\log N$. There is a set of experts of size $N$ such that for any randomized online prediction strategy $\wh{y}_t \in \{0, *, 1\}$ there is a strategy of the environment such that 
\[
R_T =  \Omega\left(\frac{\log N}{1 - 2c} \wedge \sqrt{T\log N}\right).
\]
\end{theorem}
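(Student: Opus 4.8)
The plan is to establish a matching lower bound via the standard probabilistic/information-theoretic approach for online learning lower bounds: construct a hard distribution over environments, lower bound the expected regret against a random environment, and then conclude there exists a fixed environment achieving the bound. The key difficulty specific to this problem is that the abstention option makes trivial "random labels" environments too easy, since the learner can simply abstain at cost $c < \frac12$ and thereby suffer small loss. So the environment must be designed so that either (a) one expert is genuinely good, forcing the learner to predict (not abstain) to compete, or (b) all labels are sufficiently unpredictable that abstaining is the only safe option—but then abstaining itself is costly relative to the best expert. Balancing these two regimes is what produces the $\frac{\log N}{1-2c} \wedge \sqrt{T\log N}$ form.

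\emph{First I would} handle the two regimes separately, corresponding to the two terms in the minimum. For the slow-rate term $\sqrt{T\log N}$, I expect the classical lower bound construction for prediction with expert advice to transfer almost directly: take the outcomes $y_t$ to be i.i.d.\ fair coin flips, and let each expert's predictions be an independent random $\{0,1\}$ sequence. Against fair coin flips, any learner (whether it predicts or abstains) incurs expected per-round loss at least $\tfrac12 \wedge c = c \ge \tfrac14$, while the best of $N$ independent random experts beats $\tfrac{T}{2}$ by $\Omega(\sqrt{T\log N})$ in expectation (a maximum-of-Gaussians/Khintchine-type estimate). Since $c$ is bounded away from zero by assumption $c \in [\tfrac14,\tfrac12]$, the learner's total loss is $\Omega(T)$-close to $\tfrac{T}{2}$ but cannot exploit the good expert, yielding regret $\Omega(\sqrt{T\log N})$. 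The constraint $T \ge 4\log N$ ensures this term is the binding one in the appropriate range.

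\emph{For the fast-rate term} $\frac{\log N}{1-2c}$, the construction must be more delicate, since here we want a horizon-free lower bound that \emph{grows} as $c \to \tfrac12$. The natural idea is a ``needle-in-a-haystack'' or noisy-identification construction: plant a distinguished expert $i^\star$ (chosen uniformly at random) that predicts correctly with a margin governed by $\tfrac12 - c$, while the other experts are near-random; outcomes are generated with a small bias toward $i^\star$'s prediction, of magnitude $\Theta(1-2c)$. The learner must identify $i^\star$ to avoid paying abstention cost $c$ on nearly every round, but each round reveals only $O((1-2c)^2)$ bits of information about $i^\star$'s identity (a KL/Pinsker estimate), so roughly $\frac{\log N}{(1-2c)^2}$ rounds are needed, during each of which a learner who has not yet identified $i^\star$ suffers excess loss $\Omega(1-2c)$ relative to competing with $i^\star$. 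Multiplying gives excess regret $\Omega\!\left(\frac{\log N}{1-2c}\right)$. \emph{The hard part will be} making this trade-off rigorous: one must argue via a change-of-measure (Le Cam or Fano) argument that no strategy can identify $i^\star$ faster, \emph{and} simultaneously verify that abstaining does not provide an escape—i.e.\ that while uncertain about $i^\star$, the learner pays at least $\Omega(1-2c)$ per round whether it predicts or abstains, because predicting a near-fair coin and abstaining both cost essentially $c$, which exceeds $i^\star$'s loss by the bias $\Omega(1-2c)$.

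\emph{Finally I would} combine the two regimes. Since the theorem states a lower bound of $\Omega(\cdot \wedge \cdot)$, it suffices to exhibit, for the given parameters, one environment achieving the minimum of the two quantities; concretely I would pick whichever of the two constructions yields the larger guaranteed regret for the given $(N,c,T)$, noting that the crossover between the two constructions occurs precisely when $\frac{\log N}{1-2c} \approx \sqrt{T\log N}$, i.e.\ when $1-2c \approx \sqrt{\tfrac{\log N}{T}}$. The role of the hypotheses $c \ge \tfrac14$ (bounding $c$ away from zero, so the abstention option is genuinely cheap and per-round losses are $\Theta(1)$) and $T \ge 4\log N$ (ensuring enough rounds for the maximum-of-experts fluctuation to dominate) is to guarantee both constructions are simultaneously valid and that constant factors are clean. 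The passage from expected regret under a random environment to the existence of a fixed bad environment is the routine averaging argument $\max_{\text{env}} \ge \mathbb{E}_{\text{env}}$, standard for minimax lower bounds.
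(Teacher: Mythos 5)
Your route is genuinely different from the paper's. The paper never constructs a hard environment by hand: it converts any online strategy into a randomized batch estimator via online-to-batch conversion (Lemma~\ref{lem:onlinetobatch}) and then invokes Audibert's minimax lower bound for regression with the loss $|Y-g(X)|^q$ (Lemma~\ref{lem:audibert}), with the key identification $B=\tfrac12$ and $q=\log_2\tfrac1c$: predictions in $\{\pm\tfrac12\}$ then incur exactly the binary loss, the abstention value $0$ incurs $|Y-0|^q=2^{-q}=c$, and the elementary estimate \eqref{eq:cqrelation} gives $1-2c\asymp q-1$, so Audibert's excess-risk bound of order $\frac{1}{q-1}\cdot\frac{\log N}{T}$ becomes $\frac{\log N}{(1-2c)T}$; Jensen's inequality converts the batch comparator into the best expert, and the $\sqrt{T\log N}$ regime follows by monotonicity of the learner's loss in $c$ at the boundary value $q=1+\sqrt{\lfloor\log_2 N\rfloor/(4T)}$. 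Your plan instead re-proves the needle-in-a-haystack core directly --- which is morally what sits inside Audibert's hypercube construction. That is a viable, self-contained alternative, and your fast-rate skeleton (bias $\epsilon\asymp 1-2c$, identification time $\Omega(\log N/\epsilon^2)$ by a change-of-measure argument, per-round excess $\Omega(\epsilon)$ whether the learner predicts or abstains, since abstention costs $c=\tfrac12-\tfrac{\epsilon}{2}$ against the needle's per-round loss $\tfrac12-\epsilon$) is exactly the right shape. What the paper's reduction buys is that all of this information-theoretic work is black-boxed and the abstention-curvature link is made explicit; what your approach buys is self-containment --- but note that the Fano/Le Cam step you explicitly defer is where the entire difficulty of the theorem lives, so as written the fast-rate regime is a plan, not a proof. (Also beware the endpoint $c=\tfrac14$, where $\epsilon=1-2c=\tfrac12$ saturates the bias to a deterministic relation and the per-round KL estimate $O(\epsilon^2)$ must be capped.)

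There is also a concrete error in your slow-rate paragraph. Against i.i.d.\ fair coins, an always-abstaining learner suffers total loss exactly $cT$, not anything ``close to $\tfrac{T}{2}$''; against the realized best of $N$ random experts its regret is $cT-\tfrac{T}{2}+\Theta\bigl(\sqrt{T\log N}\bigr)=\Theta\bigl(\sqrt{T\log N}\bigr)-\bigl(\tfrac12-c\bigr)T$, which is \emph{negative} whenever $\tfrac12-c\gg\sqrt{(\log N)/T}$ --- for instance $c=\tfrac14$ and $T=100\log N$, parameters the theorem allows. So ``$c$ bounded away from zero'' is not the right hypothesis: the fair-coin construction yields $\Omega\bigl(\sqrt{T\log N}\bigr)$ only in the regime $1-2c=O\bigl(\sqrt{(\log N)/T}\bigr)$, which is precisely where $\sqrt{T\log N}$ is the smaller term. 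This is, in fact, your own opening observation (random labels are ``too easy'' once abstention is available), which the slow-rate paragraph then contradicts. Your final crossover paragraph implicitly repairs the issue by using each construction only in its own regime, but that restriction must replace the stated justification; alternatively, you can drop the fair-coin environment entirely and argue as the paper does: since the learner's loss is monotone in $c$, the lower bound proved at the boundary cost $c'$ with $1-2c'\asymp\sqrt{(\log N)/T}$, where the fast-rate construction already gives $\frac{\log N}{1-2c'}\asymp\sqrt{T\log N}$, extends verbatim to every $c\ge c'$.
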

The proof of this result is based on observing that any randomized online algorithm can be converted to a randomized batch algorithm in a way that the excess risk in the latter setting is closely related to the regret in the former setting. Thus, any lower bound in the batch setting will imply a lower bound on the best achievable regret. An additional crucially important ingredient is relating the cost of abstention to a suitably chosen strongly convex loss function.

First, we recall the standard statistical learning setup. In this setting we consider the instance space $\mathcal{X}$ and the label space $\mathcal{Y} = [-B, B]$, where $B > 0$ is a constant. Let $((X_1, Y_1), \ldots, (X_T, Y_T))$ be an i.i.d. sequence of points sampled according to some unknown distribution $P$ on $\mathcal{X} \times \mathcal{Y}$. We will refer to this sequence as \emph{the learning sample} and to this setup as the \emph{batch setting}. Given the loss function $\ell: \mathcal{Y}^2 \to \mathbb{R}_{+}$, the \emph{risk} of a hypothesis $g: \mathcal{X} \to \mathcal Y$ is $\EE{\loss(g(X), Y)}$, where the expectation is taken with respect to $P$.  We denote $\mathcal{Z} = \mathcal{X} \times \mathcal{Y}$ and $(Z_1, \ldots, Z_T) = ((X_1, Y_1), \ldots, (X_T, Y_T))$. We need the following standard result which can be found in, e.g., \citet{audibert2009fast}.

\begin{lemma}[Online to batch conversion]
\label{lem:onlinetobatch}
Consider any randomized online learning algorithm $\mathcal A$ that produces the possibly randomized \footnote{To simplify the notation we do not write the parameters responsible for randomization explicitly in the functions.} hypothesis $h_t:\mathcal{X}\ra \mathcal{Y}$ at time $t$ when the environment uses an i.i.d.~sequence $Z_1, \ldots, Z_T$. We define the randomized batch algorithm $\mathcal L$ that, given the data $Z_1, \ldots, Z_T$ and any new observation $X$, predicts its label according to the hypothesis chosen uniformly at random among $\ev{h_1,h_2,\dots,h_T}$.
We denote this randomized hypothesis by $\wh{g}_T$.
Then, the risk of $\mathcal L$ satisfies
\[
\EE{\ell(\wh{g}_T(X), Y)} = \frac{1}{T}\EE{\sum\limits_{t = 1}^{T}\loss(h_t(X_t),Y_t)},
\]
where the expectation on the right-hand side integrates over the random hypothesis chosen by $\mathcal{L}$, the instances $Z_1, \ldots, Z_T$, and the randomization of the online algorithm.
\end{lemma}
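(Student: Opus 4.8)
\textbf{Proof plan for Lemma~\ref{lem:onlinetobatch}.}
The plan is to reduce the statement to a direct manipulation of expectations, exploiting the fact that the new test point $(X,Y)$ is drawn independently and from the same distribution $P$ as the training examples $Z_1,\dots,Z_T$. First I would fix the realization of the learning sample $Z_1,\dots,Z_T$ and the internal randomization of the online algorithm $\mathcal{A}$, so that the hypotheses $h_1,\dots,h_T$ become fixed (but random) functions. Conditionally on these, the batch predictor $\wh{g}_T$ is obtained by selecting an index $\tau$ uniformly at random from $[T]$, independently of everything else, and predicting via $h_\tau$. Thus for a fresh $(X,Y)\sim P$ we have, by the law of total expectation over the uniform choice of $\tau$,
\[
\EEc{\loss(\wh{g}_T(X),Y)}{Z_1,\dots,Z_T} = \frac{1}{T}\sum_{t=1}^T \EEc{\loss(h_t(X),Y)}{Z_1,\dots,Z_T}.
\]

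The key observation is that each summand on the right can be replaced by the corresponding ``online'' term $\loss(h_t(X_t),Y_t)$ in expectation. This is where the online-learning structure matters: the hypothesis $h_t$ is produced by $\mathcal{A}$ using only the past observations $Z_1,\dots,Z_{t-1}$ (together with the algorithm's private randomness), so $h_t$ is independent of $Z_t=(X_t,Y_t)$. Since $(X,Y)$ and $Z_t$ are both distributed according to $P$ and both independent of $h_t$, the test point $(X,Y)$ and the ``hold-out'' point $(X_t,Y_t)$ are exchangeable from the viewpoint of $h_t$. Consequently
\[
\EE{\loss(h_t(X),Y)} = \EE{\loss(h_t(X_t),Y_t)}
\]
for every $t\in[T]$, where the expectations now integrate over the sample, the randomness of $\mathcal{A}$, and the fresh point.

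To finish, I would take the total expectation of the conditional identity above, substitute this per-round equality term by term, and then pull the uniform average over $t$ back inside the expectation, yielding
\[
\EE{\loss(\wh{g}_T(X),Y)} = \frac{1}{T}\sum_{t=1}^T \EE{\loss(h_t(X_t),Y_t)} = \frac{1}{T}\EE{\sum_{t=1}^T \loss(h_t(X_t),Y_t)},
\]
which is exactly the claimed identity. The main subtlety to handle carefully is the independence/exchangeability step: one must verify that $h_t$ truly depends only on $Z_1,\dots,Z_{t-1}$ and the internal randomization, and that this randomization is independent of the fresh test point $(X,Y)$, so that swapping $(X_t,Y_t)$ for $(X,Y)$ leaves the expectation unchanged. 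Everything else is bookkeeping with Fubini's theorem and the linearity of expectation; no inequality or loss-specific property is used, so the identity holds for an arbitrary nonnegative loss $\loss$.
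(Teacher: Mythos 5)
Your proof is correct. Note that the paper itself does not prove this lemma at all: it is quoted as a standard result and attributed to \citet{audibert2009fast}, so there is no in-paper argument to compare against. What you have written is precisely the classical online-to-batch conversion argument that underlies the cited result: condition on the sample and the algorithm's internal randomness, average over the uniformly drawn index $\tau$, and then use that $h_t$ is measurable with respect to $Z_1,\dots,Z_{t-1}$ and the private random bits only, so that the pair $(h_t,(X,Y))$ has the same joint law as $(h_t,(X_t,Y_t))$ and the per-round substitution $\EE{\loss(h_t(X),Y)}=\EE{\loss(h_t(X_t),Y_t)}$ is an exact equality rather than a bound. You correctly flag the one genuine subtlety --- that $h_t$ must not depend on $Z_t$, which holds in the prediction-then-reveal protocol of the paper --- and the integrability bookkeeping is harmless since the losses in question are nonnegative and bounded. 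The argument fills the gap the paper delegates to the reference, and it does so in the standard way.
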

Our main technical tool is the following lower bound in the batch setting.
\begin{lemma}[Theorem 8.4 in \citet{audibert2009fast}]\label{lem:audibert}
\label{ellqloss}
Let $B > 0, N \ge 2$ and $q > 1 + \sqrt{\frac{\lfloor\log_2
N\rfloor}{4T} \wedge 1}$, if $\mathcal X$ contains at least $\lfloor\log_2
(2N)\rfloor$ points, there exists a set
$G$ of $N$ hypotheses satisfying for any (possibly randomized \footnote{We note that the original version of this result does not highlight that $\wh{g}$ is potentially randomized. Nevertheless, the standard argument (see, e.g., the discussions on Yao’s minimax principle in \citet{shamir2015sample} or Remark 8.1 in \citet{audibert2009fast} claiming that any lower bound for the batch setting leads to a lower bound in the sequential prediction setting) shows that Lemma \ref{ellqloss} continues to hold in the same form.}) estimator $\wh{g}$ there exists a
probability distribution $P$ on $\mathcal{Z}$ such that
\[
\EE{|Y - \wh{g}(X)|^q} - \min\limits_{g \in G}\EE{|Y - g(X)|^q} \ge \left(\frac{q}{90(q - 1)}\lor e^{-1}\right)B^q\left(\frac{\lfloor\log_2
N\rfloor}{T + 1}\wedge 1\right),
\]
where the expectation in $\EE{|Y - \wh{g}(X)|^q}$ is taken with respect to the learning sample, $(X, Y)$ and the (possible) randomness of the algorithm.
\end{lemma}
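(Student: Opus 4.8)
The plan is to establish this as an Assouad-type minimax lower bound built from a hypercube of ``hard'' distributions supported on a finite instance space, exploiting the curvature of the $\ell_q$-loss $\loss(y,y') = |y-y'|^q$ to obtain the fast $\log N/T$ rate.

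\emph{Construction.} I would take $\X = \{x_1,\dots,x_m\}$ with $m = \lfloor \log_2 N\rfloor$ points (so that $2^m \le N$ and $m \le \lfloor\log_2(2N)\rfloor$, matching the hypothesis on $\X$), put the uniform marginal on $\X$, and index a family of distributions by the hypercube $\sigma \in \{\pm1\}^m$: conditionally on $X = x_j$, let $Y \in \{-B,+B\}$ with $\PP{Y = B \mid X = x_j} = \tfrac12(1 + \sigma_j \delta)$ for a bias $\delta \in (0,1)$ to be tuned. For each $\sigma$ the $\ell_q$-risk is minimized coordinatewise by the Bayes predictor $g_\sigma(x_j) = \sigma_j t^*$, where $t^*$ solves $\left(\tfrac{B - t}{B+t}\right)^{q-1} = \tfrac{1-\delta}{1+\delta}$, giving $t^* = \Theta\!\left(\tfrac{\delta B}{q-1}\right)$ for small $\delta$. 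I would let $G$ consist of these $2^m \le N$ predictors, padded by duplication up to size exactly $N$, so that $\min_{g\in G}\EE{|Y - g(X)|^q}$ equals the Bayes risk under any $P_\sigma$.

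\emph{Per-coordinate excess and reduction to sign testing.} Because both the loss and the marginal are separable over the $m$ points, the excess risk of any $\wh g$ under $P_\sigma$ equals $\tfrac1m\sum_{j=1}^m \big(R_j(\wh g(x_j)) - R_j(\sigma_j t^*)\big)$, where $R_j$ is the conditional $\ell_q$-risk at $x_j$. The key geometric step is to show that predicting the \emph{wrong sign} at a coordinate costs at least $\big(\tfrac{q}{90(q-1)} \lor e^{-1}\big)\delta^2 B^q$ in conditional excess. For $q$ near $1$ this follows from a second-order expansion of $R_j$ around $t^*$, using $R_j''(t^*) = \Theta(q(q-1)B^{q-2})$ and $t^* = \Theta(\delta B/(q-1))$, which yields the $\tfrac{q}{q-1}$ factor; the hypothesis $q > 1 + \sqrt{\tfrac{\lfloor\log_2 N\rfloor}{4T}\wedge 1}$ guarantees $t^* \le B$, so that $g_\sigma$ is a legitimate $\X\to[-B,B]$ predictor and the expansion is controlled, while the $\lor\,e^{-1}$ term is obtained for large $q$ by a direct (non-Taylor) estimate of $R_j(-t^*) - R_j(t^*)$. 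Thresholding $\wh g(x_j)$ at $0$ turns ``small excess at $x_j$'' into ``correct sign at $x_j$'', so up to the above constant the expected excess risk is lower bounded by $\tfrac1m\sum_j \PP{\sign \wh g(x_j) \ne \sigma_j}$ times $\delta^2 B^q$.

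\emph{Information bound and tuning.} I would finish with Assouad's lemma: comparing $P_\sigma$ with its $j$-th sign-flip $P_{\sigma^{\oplus j}}$, the sign at $x_j$ can only be inferred from the (on average) $T/m$ samples landing on $x_j$, and the per-sample squared Hellinger / KL distance between $\mathrm{Bernoulli}(\tfrac{1\pm\delta}{2})$ is $\Theta(\delta^2)$. Choosing $\delta^2 = \Theta(m/T)$ keeps the total information per coordinate $O(1)$, so averaged over the hypercube each coordinate is mis-signed with constant probability. Combining, there is a $\sigma$ (hence a worst-case $P = P_\sigma$) for which the expected excess risk is at least $\big(\tfrac{q}{90(q-1)}\lor e^{-1}\big)B^q\,\Theta(\delta^2) = \big(\tfrac{q}{90(q-1)}\lor e^{-1}\big)B^q\,\Theta\!\big(\tfrac{\lfloor\log_2 N\rfloor}{T}\big)$, and the $\wedge 1$ in the statement absorbs the small-$T$ regime where $\delta$ would saturate. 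I expect the main obstacle to be the per-coordinate curvature estimate: the $\ell_q$-loss is not uniformly strongly convex as $q\to1$, the minimizer $t^*$ drifts with $\delta$, and extracting the sharp $\tfrac{q}{q-1}$ constant (and the competing $e^{-1}$ for large $q$) uniformly over the admissible range of $q$ is the delicate part, whereas the Assouad information-theoretic step is standard.
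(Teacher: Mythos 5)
You should first be aware that the paper contains no proof of this statement: it is imported verbatim as Theorem~8.4 of \citet{audibert2009fast}, and the only original content attached to it is the footnote extending the bound to randomized $\wh{g}$ by a Yao-type averaging argument. So there is no internal proof to match your proposal against; the meaningful comparison is with Audibert's construction, whose relevant features the paper does describe inside its proof of Theorem~\ref{thm:lower}.

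Measured against that, your Assouad-style reconstruction is the standard route and is sound in outline, and it gets the one subtle soundness point right: taking $G$ to be the $2^m$ Bayes predictors $g_\sigma(x_j)=\sigma_j t^*$ is exactly what makes the statement provable with Bernoulli-biased labels, since then $\min_{g\in G}\EE{|Y-g(X)|^q}$ is the global Bayes risk and no estimator can drive the excess negative (with biased labels and a $\ev{\pm B}$-valued $G$, an estimator playing $t^*$ would beat $G$). The curvature computation $t^*=\Theta\pa{\delta B/(q-1)}$, the wrong-sign cost $\Theta\pa{\tfrac{q}{q-1}\delta^2B^q}$, and the tuning $\delta^2=\Theta(m/T)$, with the hypothesis $q-1>\sqrt{m/(4T)}\wedge 1$ invoked precisely to keep $t^*/B$ and the quadratic expansion under control, reproduce the stated rate; a careful accounting of constants (per-coordinate KL of order $\delta^2 T/m$, Pinsker, curvature $R''\ge \tfrac12 q(q-1)B^{q-2}$ for $q\le 2$) plausibly even recovers the generous $\tfrac{q}{90(q-1)}$ branch. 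You should, however, state explicitly that the hypercube-averaging argument covers randomized estimators, since that is exactly the point the paper's footnote is at pains to add.

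Two genuine discrepancies remain. First, your route cannot deliver the constant $e^{-1}$ for large $q$: Assouad's information loss forces $\delta^2 T/m=O(1)$, so the product of $\delta^2/(m/T)$ and the achievable wrong-sign probability stays well below $e^{-1}\approx 0.37$, and your ``direct non-Taylor estimate'' $R_j(-t^*)-R_j(t^*)=\delta\bigl[(B+t^*)^q-(B-t^*)^q\bigr]=\Theta\pa{\tfrac{q}{q-1}\delta^2B^q}$ has the same budget problem. Audibert obtains $e^{-1}$ from a different mechanism, which the paper itself records ($h_1=B$, $h_2=-B$, $p_+=1$, $p_-=0$): labels are \emph{deterministic} given $X$, the marginal puts mass $\tfrac{1}{T+1}$ on each hypercube point (the extra $\lfloor\log_2(2N)\rfloor$-th point absorbs the residual mass), a point is unseen with probability at least $e^{-1}$, and at an unseen point the minimax response $t=0$ costs $B^q$ while the true sign function in $G$ costs $0$ --- this is also where the $T+1$ denominator comes from. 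Since the paper only uses the lemma up to an absolute constant $C$, this gap is cosmetic for the application but real for the verbatim statement. Second, and more consequential here: your $G$ takes values $\pm t^*$, whereas the paper's proof of Theorem~\ref{thm:lower} extracts from Audibert's construction the structural facts that $G$ is $\ev{\pm\tfrac12}$-valued and $Y\in\ev{\pm\tfrac12}$ almost surely, so that $|Y-g(X)|^q=\II{g(X)\neq Y}$ on $G$ and abstention can be encoded as the output $0$ of cost $c=2^{-q}$. Your version proves the lemma as literally stated, but it could not be substituted for the citation without breaking that online-to-batch identification; to serve the paper's purpose you would need the deterministic-label, sign-valued variant of the construction.
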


With these results in mind we are ready to provide some intuition on the proof of Theorem \ref{thm:lower}. For $B = \frac{1}{2}$ it appears that the construction in Lemma \ref{lem:audibert} uses only the distributions $P$ such that $Y = \pm \frac{1}{2}$ and the set $G$ of hypotheses taking their values in $\{\pm \frac{1}{2}\}$. Using Lemma \ref{lem:onlinetobatch} based on our online learning algorithm we may construct an estimator $\hat{g}$ taking its values in $\{\pm \frac{1}{2}, 0\}$ where the value $0$ corresponds to an abstention. Indeed, it is clear that as long as we consider only $\pm \frac{1}{2}$ outputs the $\ell_q$ loss is equivalent to binary. However, if $\hat{g}$ predicts $0$ then for $q = \log_2 \frac{1}{c}$ the loss will be equal to $c$ for any $Y$ as above. The full proof of Theorem \ref{thm:lower} is presented in Appendix~\ref{app:proofs}.

\section{Extensions}\label{sec:ext}
In this section we provide two natural extensions of our results to the setting of multiclass classification, and the case of binary classification with an infinite set of experts. 
\subsection{Multiclass classification}
We now turn our attention to a generalization of our online prediction setup that allows multiple 
classes. Specifically, we will now consider the case where the outcomes can take $K$ different 
values, and the learner's prediction is evaluated through the binary loss. Formally, our setup can be 
described as follows:
\begin{mdframed}[roundcorner=12pt,linewidth=1pt,leftmargin=2cm,rightmargin=2cm,topline=true,bottomline=true,skipabove=6pt]
\begin{center}
 \textbf{The protocol of online multiclass prediction with abstentions}
\end{center}
For each round $t=1,2,\dots,T$, repeat:
\begin{enumerate}
 \item The learner observes the $[K]$-valued predictions of $N$ experts, denoted as 
$y_{t,1},\dots,y_{t,N}$,
 \item based on these observations and possibly some randomness, the learner predicts 
$\wh{y}_t\in[K]\cup\ev{*}$,
 \item the environment reveals $y_t\in[K]$,
 \item the learner suffers the loss $\wh{\ell}_t = \II{\wh{y}_t\neq y_t}$ if $\wh{y}_t\in[K]$, 
otherwise a loss of $\wh{\ell}_t = c < \frac 12$ if $\wh{y}_t = \abstain$.
\end{enumerate}
\end{mdframed}
Our algorithm described in the previous sections can be extended to this setup in a 
straightforward way. Specifically, let us define $w_{t,i}$ and $q_{t,i}$ for each expert $i$ 
in the same way as before, and define the aggregated probability assigned to class $k$ as $p_{t,k} = \sum_{i=1}^N q_{t,i} \II{y_{t,i}=k}$.
Given the above notation, Algorithm~\ref{alg:main} can be adapted to the multiclass case by making the following adjustments:
\begin{enumerate}
 \item Redefine $p_t^* = \max_{k\in[K]} p_{t,k}$ and $k_t^* = \argmax_{k\in[K]} p_{t,k}$, that is, the 
label attaining the highest probability, and
 \item let $\alpha_t = 2(1-p_t^*) \wedge 1$ and predict $*$ with probability $\alpha_t$ and $k_t^*$ with probability $1-\alpha_t$.
\end{enumerate}
Note that whenever $p_t^* \le \frac 12$, the algorithm abstains with probability $1$. While this may appear surprising at first sight, it is easily justified by observing that the probability of predicting incorrectly in such cases results in a loss greater than $\frac 12$, thus one can only win by abstaining by our assumption that $c\le \frac 12$. We also note that the lower bound of Theorem~\ref{thm:lower} continues to hold in the multiclass setting since one can clearly embed a binary classification problem into this more general problem setting. Thus, we cannot expect to have rates faster than $\frac{\log N}{1-2c}$ in the multiclass case either.
The following theorem (proved in Appendix~\ref{app:proofs}) establishes an upper bound of this exact order, thus matching the lower bound.
\begin{theorem}\label{thm:multiclass}
 Suppose that $c<\frac12$ and $\eta = 2(1-2c) \lor \sqrt{\frac{8\log N}{T}}$. Then, the expected regret of our algorithm for multiclass classification satisfies
 \[
  R_T \le \frac{\log N}{2(1-2c)} \wedge \sqrt{\frac{T\log N}{2}}.
 \]
\end{theorem}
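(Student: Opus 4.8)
The plan is to follow the same two-step route used to prove Theorem~\ref{thm:main} and Corollary~\ref{cor:main}, re-establishing the per-round comparison with the mix loss in the multiclass setting and then reusing the mix-loss regret bound verbatim. Indeed, Lemma~\ref{lem:mixregret} depends only on the exponentially weighted forecaster weights and on the values $\ell_{t,i}\in\ev{0,1}$, so it holds unchanged for $K$ classes. The entire content of the proof therefore reduces to re-proving the analogue of Lemma~\ref{lem:mixbound}, namely $\EEb{\hloss_t}\le\tloss_t$ whenever $\eta\le 2(1-2c)$, together with a matching ``slow-rate'' inequality $\EEb{\hloss_t}\le\sum_i q_{t,i}\ell_{t,i}$ valid for $c\le\frac12$. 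Once both are in hand, combining with Lemma~\ref{lem:mixregret} (and, for the slow rate, with Hoeffding's lemma) and tuning $\eta$ exactly as in Corollary~\ref{cor:main} yields the claimed bound.

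For the key lemma I would keep the misclassification probability $r_t = \sum_i q_{t,i}\II{y_{t,i}\neq y_t} = 1 - p_{t,y_t}$, so that the identity $\tloss_t = f(r_t)$ with $f$ from~\eqref{eq:fg} still holds, and I would split the analysis of $\EEb{\hloss_t} = \alpha_t c + (1-\alpha_t)\II{k_t^*\neq y_t}$ according to the value of $p_t^*$. When $p_t^*\ge\frac12$ and the top label is correct ($k_t^* = y_t$) we have $r_t = 1 - p_t^*\le\frac12$ and the computation is identical to the binary case, giving $\EEb{\hloss_t} = g(r_t)$. The genuinely new case is $p_t^*\ge\frac12$ with $k_t^*\neq y_t$: here, unlike the binary setting, the residual mass $1-p_t^*$ is spread over several classes, so instead of an exact identity we only have $p_{t,y_t}\le 1-p_t^*$, i.e. $p_t^*\le r_t$.

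This is where I expect the main obstacle to lie, namely in the loss of the exact relation between $r_t$ and $p_t^*$, and the resolution is a monotonicity argument: the expression $\EEb{\hloss_t} = 2(1-p_t^*)c + (2p_t^* -1)$ is increasing in $p_t^*$ (its derivative equals $2(1-c)>0$), so substituting the upper bound $p_t^*\le r_t$ gives $\EEb{\hloss_t}\le g(r_t)$ with $r_t\ge\frac12$. In either correct or incorrect subcase we thus obtain $\EEb{\hloss_t}\le g(r_t)$, and the inequality $g\le f$ already established inside the proof of Lemma~\ref{lem:mixbound} closes the argument. It then remains to treat the new full-abstention regime $p_t^*<\frac12$, where $\alpha_t = 1$ and $\EEb{\hloss_t} = c$: since $p_{t,y_t}\le p_t^* <\frac12$ forces $r_t>\frac12$, and since $f$ is increasing with $g(\frac12)=c$, we get $c = g(\frac12)\le f(\frac12)\le f(r_t) = \tloss_t$. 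Finally, observing that $g(r)\le r$ for $c\le\frac12$ and that $c\le r_t$ in the abstention case yields $\EEb{\hloss_t}\le\sum_i q_{t,i}\ell_{t,i}$ in every case, which feeds the Hoeffding-based slow-rate branch; tuning $\eta$ as in Corollary~\ref{cor:main} completes the proof.
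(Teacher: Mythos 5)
Your proposal is correct and follows essentially the same route as the paper's own proof: the same reduction to Lemma~\ref{lem:mixregret}, the same case split on $p_t^*$ and $k_t^*$, the same key observation $p_t^* \le r_t$ combined with monotonicity in $p_t^*$ in the misclassified case, and the same conclusion $\EEb{\hloss_t}\le g(r_t)\le f(r_t)=\tloss_t$ followed by the Corollary~\ref{cor:main}-style tuning for the slow-rate branch. The only (equally valid) cosmetic difference is the full-abstention case $p_t^*<\frac{1}{2}$, where you deduce $c=g\bigl(\tfrac{1}{2}\bigr)\le f\bigl(\tfrac{1}{2}\bigr)\le f(r_t)$ from $g\le f$ and monotonicity of $f$, whereas the paper lower-bounds the mix loss directly via elementary inequalities; both arguments use $\eta\le 2(1-2c)$ in the same way.
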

\subsection{Infinite sets of experts}
\label{sec:littlestone}
Let us return to the case of binary classification with the twist that we now allow the set of hypotheses to be infinite. As shown by \citet*{ben2009agnostic}, the complexity of online prediction in this case is characterized by the so-called \emph{Littlestone dimension} of the class $\mathcal{H}$ of hypothesis---for the precise definition, we refer to \citet{ben2009agnostic}. Assuming that the Littlestone dimension $L$ of the class is finite, 
\citeauthor{ben2009agnostic} show that a clever extension of the exponentially weighted forecaster satisfies the regret bound
$
R_T \le \sqrt{2LT \log \left(\frac{eT}{L}\right)}$.

In our setup we provide the following regret bound, and defer the proof to Appendix~\ref{app:proofs}.
\begin{corollary}
\label{cor:inflclass}
Assume that $\mathcal{H}$ has finite Littlestone dimension $L$ and $T\ge L$, and that the price of abstention is $c \le \frac{1}{2}$. Then, there is a randomized prediction strategy satisfying
\[
R_T \le \frac{L \log \frac{eT}{L}}{2(1 - 2c)} \wedge \sqrt{\frac{LT \log \left(\frac{eT}{L}\right)}{2}}.
\]
\end{corollary}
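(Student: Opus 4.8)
The plan is to reduce the infinite-expert problem to a finite-expert one and then invoke Corollary~\ref{cor:main} essentially verbatim. The key ingredient is the expert construction of \citet{ben2009agnostic}: for a class $\HH$ of Littlestone dimension $L$ over instances $x_1,\dots,x_T$, one can build a finite family $\mathcal{E}$ of ``meta-experts'', each of which produces a deterministic $\ev{0,1}$-valued prediction $E^{(i)}_t$ computable from the history $(x_1,y_1,\dots,x_{t-1},y_{t-1},x_t)$ that is available to the learner at the start of round $t$. These experts are indexed by subsets $I\subseteq[T]$ with $|I|\le L$ marking the rounds on which an internal run of the Standard Optimal Algorithm is forced to err; since each forced error strictly decreases the Littlestone dimension of the maintained version space, only $\binom{T}{\le L}$ such experts are needed. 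Using $T\ge L$, this gives $N := |\mathcal{E}| \le \binom{T}{\le L} \le \pa{\frac{eT}{L}}^L$, hence $\log N \le L\log\frac{eT}{L}$.

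The crucial property I would invoke is that this family covers every hypothesis. For any $h\in\HH$, choosing $I$ greedily---adding round $t$ to $I$ precisely when $h$ disagrees with the current SOA prediction---yields an expert whose predictions coincide with $h(x_1),\dots,h(x_T)$ on the realized instances, and a counting argument on the version-space dimension shows that this $I$ satisfies $|I|\le L$. Consequently the best meta-expert is at least as good as the best hypothesis:
\[
\min_{i\in[N]} \sum_{t=1}^T \II{E^{(i)}_t\ne y_t} \le \min_{h\in\HH}\sum_{t=1}^T \II{h(x_t)\ne y_t}.
\]

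With the finite family in hand, I would run Algorithm~\ref{alg:main} treating the $N$ meta-expert predictions as the expert advice, with $\eta = 2(1-2c) \lor \sqrt{\tfrac{8\log N}{T}}$ as in Corollary~\ref{cor:main}. The entire analysis of Section~\ref{sec:main}---Lemmas~\ref{lem:mixregret} and~\ref{lem:mixbound}, and hence Corollary~\ref{cor:main}---refers only to the loss values $\ell_{t,i}=\II{E^{(i)}_t\ne y_t}$ and is insensitive to the fact that these experts are themselves adaptive functions of the past, so it applies without change. Corollary~\ref{cor:main} then bounds the regret against the best meta-expert by $\frac{\log N}{2(1-2c)}\wedge\sqrt{\frac{T\log N}{2}}$. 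Since the covering inequality shows the meta-expert benchmark is no larger than the hypothesis benchmark, the regret against the best $h\in\HH$ is no larger still, and substituting $\log N\le L\log\frac{eT}{L}$ yields exactly the claimed guarantee.

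The only genuine subtleties, and the steps I would take most care over, are (i) verifying that each $E^{(i)}_t$ is indeed computable from information available before the learner must act, so that it is a legitimate input to Algorithm~\ref{alg:main}, and (ii) using the covering guarantee in the correct direction, namely that a smaller comparator loss can only \emph{increase} the regret we must control. Everything else is a direct substitution of the effective cardinality $\log N \le L\log\frac{eT}{L}$ into Corollary~\ref{cor:main}, with no fresh tuning of $\eta$ required.
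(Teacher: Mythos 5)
Your proposal is correct and follows essentially the same route as the paper: reduce to a finite set of at most $\sum_{i=0}^{L}\binom{T}{i} \le \left(\frac{eT}{L}\right)^L$ covering experts via the construction of \citet{ben2009agnostic} (stated in the paper as Lemma~\ref{lem:bendavid}), then apply Corollary~\ref{cor:main} with $\log N \le L\log\frac{eT}{L}$. The only difference is that you unfold the SOA-based proof of the covering lemma and spell out the benchmark-comparison and adaptivity subtleties, which the paper handles by citing the lemma as a black box.
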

In particular, whenever $c$ is separated from $\frac{1}{2}$ the regret scales as $O(\log T)$ which is much better than the standard $O(\sqrt{T})$ dependence. We also remark that this result can be seen as an online analog of Theorem 2.2 in \cite{bousquet2019fast}, with the Littlestone dimension replacing the VC dimension therein. However, as we mentioned, their algorithm and analysis are based on completely different techniques that take their roots in empirical process theory.

\section{Concluding remarks and directions of future research}\label{sec:conc}
For the first time in the context of online learning, we have shown that endowing the learner with the option of abstaining from prediction allows a significant improvement in the best achievable rates, and that a very natural algorithm can attain the optimal rates. Notably, these results are proved without making any assumption about the sequence of outcomes that the learner has to predict or about the curvature of the loss functions. In this view, our results represent a new flavor of fast rates that, to the best of our knowledge, haven't been explored before in the online learning literature. Arguably, one key factor that makes these fast rates possible is that the learner has access to an action that the comparator strategy cannot use. However, this feature alone cannot explain our results: since we only require the extra action to be marginally better than a uniform random prediction, the learner cannot abuse it to easily beat the best expert. How to characterize properties of such powerful additional actions for the learner in more general convex optimization problems remains to be seen.

Our key technical contribution is observing that the expected loss of our algorithm is upper bounded by the mix loss. 
This technique played a crucial role in proving a number of classic results in the literature, and was traditionally enabled by directly making an assumption about the curvature of the loss function evaluating the learner's predictions. To our knowledge, our work is the first to make use of this technique without making direct assumptions about the loss. Rather, our approach is based on transforming the output $p_t$ of the exponential weights algorithm into a $\{0,*,1\}$-valued prediction in a way that effectively replaces the linear loss by a mixable one from the perspective of the algorithm producing $p_t$. Since this transformation can be applied to the output of any algorithm, we find it plausible that our rates could be improved by a more refined prediction algorithm such as the Aggregating Forecaster of \citet{vovk1998game,vovk2001competitive}. 
Another possible improvement to pursue is proving high-probability versions of our upper bounds.

Another important technique we employed for proving our bounds in Section~\ref{sec:changingcosts} was an adaptive stepsize schedule that decreases the learning rate in each round when it proves too large for the loss to be mixable. This technique bears a vague resemblance to the AdaHedge algorithm of \citet{erven2011adaptive, de2014follow} that uses a learning rate proportional to the cumulative gap between the expected loss and the mix loss. However, their analysis is restricted to the case where this gap is nonnegative, whereas our bounds crucially use that they can be often negative. As a result, our technique can be directly adapted to account for more general mixable losses with time-dependent mixability parameters. Similarly, our variant of Tsybakov's condition on the abstention costs can be adapted to mixability parameters to characterize the hardness of learning with a non-stationary sequence of loss functions. For instance, one can easily prove the following result by adjusting our proof techniques in a straightforward way:
\begin{proposition}
\label{th:changingc}
Consider the problem of predicting a sequence of outcomes in $[0,1]$ under a sequence of loss functions $\ell_{q_t} = |\cdot|^{q_t}$, $q_t \in (1, 2]$ that satisfy the following Tsybakov's type margin assumption for $q_t$: for any $x > 0$, $
\frac{1}{T} \sum_{t=1}^T \II{q_t - 1 < x} \le \beta x^{\frac{\alpha}{1 - \alpha}}$ holds
for some $\beta > 0, \alpha \in [0, 1)$. Then, there is a deterministic prediction strategy satisfying
$
R_T = O\left(\left(\log N\right)^{\frac{1}{2 - \alpha}}T^{\frac{1 - \alpha}{2 - \alpha}}\right)$.
\end{proposition}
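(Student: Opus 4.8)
The plan is to run the exponentially weighted forecaster over the $N$ experts exactly as in Section~\ref{sec:main}, maintaining weights $w_{t,i}=e^{-\eta\sum_{k<t}\ell_{k,i}}$ with $\ell_{k,i}=|y_{k,i}-y_k|^{q_k}$ and the induced distribution $q_{t,i}$, and to predict \emph{deterministically} with the weighted mean $\wh{y}_t=\sum_i q_{t,i}y_{t,i}\in[0,1]$. Since the weight update is left untouched, Lemma~\ref{lem:mixregret} applies verbatim and gives $\sum_t \tloss_t \le \min_i \sum_t \ell_{t,i} + \frac{\log N}{\eta}$ with $\tloss_t=-\frac1\eta\log\sum_i q_{t,i}e^{-\eta\ell_{t,i}}$. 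Exactly as in the abstention analysis, everything then reduces to controlling the per-round gap $\ell_{q_t}(\wh{y}_t,y_t)-\tloss_t$, and the role played there by the threshold $2(1-2c_t)$ will now be played by the exp-concavity (equivalently, mixability) threshold of the loss $\ell_{q_t}=|\cdot|^{q_t}$.

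The crux---the analog of Lemma~\ref{lem:mixbound}---is that $\ell_q(p,y)=|p-y|^q$ is $\eta$-exp-concave on $[0,1]^2$ for every $\eta\le\frac{q-1}{q}$. I would verify this by a direct computation: writing $u=p-y$ and differentiating $e^{-\eta|u|^q}$ twice in $p$, concavity amounts to $\eta\,q\,|u|^q\le q-1$, which, since $|u|=|p-y|\le 1$, holds for all $p,y\in[0,1]$ as soon as $\eta\le\frac{q-1}{q}$. As $q\in(1,2]$ gives $\frac{q-1}{q}\ge\frac{q-1}{2}$, it is convenient to record the clean sufficient condition $\eta\le\frac{q_t-1}{2}=:\eta_{q_t}^*$. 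Exp-concavity is precisely what makes the mean prediction succeed: Jensen's inequality applied to the concave map $p\mapsto e^{-\eta\ell_{q_t}(p,y_t)}$ yields $e^{-\eta\ell_{q_t}(\wh{y}_t,y_t)}\ge\sum_i q_{t,i}e^{-\eta\ell_{t,i}}$, i.e. $\ell_{q_t}(\wh{y}_t,y_t)\le\tloss_t$, in every round with $\eta\le\eta_{q_t}^*$.

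It remains to handle the ``non-mixable'' rounds where $\eta>\eta_{q_t}^*$, and here I would reuse the fallback from the proof of Proposition~\ref{prop:changingc}: since $\ell_{q_t}$ is convex, the same mean prediction satisfies $\ell_{q_t}(\wh{y}_t,y_t)\le\sum_i q_{t,i}\ell_{t,i}$ by Jensen, and Hoeffding's lemma (the losses lie in $[0,1]$) gives $\sum_i q_{t,i}\ell_{t,i}\le\tloss_t+\frac\eta8$. Summing the two cases and invoking Lemma~\ref{lem:mixregret} reproduces the exact analog of Proposition~\ref{prop:changingc},
\[
R_T \le \frac{\log N}{\eta} + \frac{\eta}{8}\sum_{t=1}^T \II{\eta > \eta_{q_t}^*} \le \frac{\log N}{\eta} + \frac{\eta}{8}\sum_{t=1}^T \II{q_t - 1 < 2\eta}.
\]
From this point the argument is identical to Corollary~\ref{cor:intermediaterates}: Tsybakov's condition bounds $\sum_t\II{q_t-1<2\eta}\le\beta T(2\eta)^{\alpha/(1-\alpha)}$, so the second term is $O(\eta^{1/(1-\alpha)}T)$, and choosing $\eta=(\log N/T)^{(1-\alpha)/(2-\alpha)}$ balances the two terms to give $R_T=O\!\big((\log N)^{1/(2-\alpha)}T^{(1-\alpha)/(2-\alpha)}\big)$.

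I expect the only genuinely new step to be the exp-concavity computation of the second paragraph, together with checking that its threshold scales like $q-1$ so that it lines up with the exponent in the assumed Tsybakov condition on $q_t-1$; the mix-loss bookkeeping, the convexity/Hoeffding fallback, and the tuning of $\eta$ are all routine adaptations of arguments already established in Sections~\ref{sec:main} and~\ref{sec:changingcosts}.
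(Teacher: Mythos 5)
Your proposal is correct and follows essentially the same route the paper sketches: predict with the weighted mean of the exponential-weights distribution, bound the loss by the mix loss whenever $\eta$ is below a threshold scaling as $q_t-1$ (the paper states $\eta\le q_t-1$, you derive the slightly smaller exp-concavity threshold $\frac{q_t-1}{q_t}\ge\frac{q_t-1}{2}$, a constant-factor difference that is absorbed by the Tsybakov condition and the $O(\cdot)$), fall back to convexity plus Hoeffding on the remaining rounds, and tune $\eta$ exactly as in Corollary~\ref{cor:intermediaterates}. In fact your write-up supplies the exp-concavity computation that the paper leaves as an exercise, so it is a faithful and complete instantiation of the intended argument.
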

This result uses a variant of Tsybakov's condition that characterizes the behaviour of the second derivative of the loss around zero.
The statement is proven by observing that the expected loss is upper-bounded by the mix loss whenever $\eta\le q-1$, and then closely following the proof steps in Section~\ref{sec:changingcosts}---we leave this as an easy exercise for the reader. The result can be seen to address an open question of \citet{haussler1998sequential} about whether intermediate rates between $O(\log N)$ and $O(\sqrt{T\log N})$ are attainable in online prediction under suitable curvature assumptions on the loss functions. It also bears resemblance to the results of \cite*{hazan2008adaptive} who provide algorithms that adapt to the curvature of the losses in more general online convex optimization problems, the key difference being that the curvature of our losses can be quantified in terms of exp-concavity rather than strong convexity as done by \citeauthor{hazan2008adaptive}.

Finally, we note that while our techniques are different from those of \cite*{bousquet2019fast} who prove analogous results for the setting of statistical learning, the two analyses actually related in a very subtle way: both of them are based on techniques that are traditionally used for proving fast rates for losses with curvature, in their respective settings. While our approach is based on bounding the mixability gap as discussed above, the one of \citeauthor{bousquet2019fast} is based on an extension of the recently proposed technique of \citet{mendelson2019} for statistical learning with the square loss. Our lower bound also makes use of this connection. Deeper investigation of this connection between curvature and the abstention may lead to further interesting results.
    
\acks{We thank the three anonymous reviewers for their valuable feedback that helped us improve the paper, particularly for catching a small bug in the original proof of Theorem~\ref{thm:multiclass}. G.~Neu was supported by ``la Caixa'' Banking Foundation through the Junior Leader Postdoctoral Fellowship Programme, a Google Faculty Research Award, and a Bosch AI Young Researcher Award.}

\bibliography{mybib}

\appendix
\section{Why do we need to randomize?}
\label{app:example}
\begin{example}
Consider the case of two experts such that one of them always predicts zero and the other always predicts one. Assume that the environment waits for the learner’s prediction and then provides the opposite label as the true label if the learner predicts zero or one. If the learner decides to abstain the environment provides the label according to the prediction of the best expert on $T$ rounds. In what follows, we show that the regret of any deterministic strategy with abstention will be at least
$
\frac{cT}{2}.
$
\end{example}
Assume that out of $T$ rounds the learner used the abstention $k$ times and on the remaining $T - k$ rounds they predicted $0$ or $1$. The total loss suffered by the learner will be equal to $ck + T - k$. Now we need to upper bound the loss of the best expert. The simple upper bound is $\frac{T}{2}$. Now, since the environment is aware of the deterministic strategy of the learner we can always guarantee that it reveals the labels such that the loss of the best expert on these $k$ rounds is equal to zero. This implies the number of mistakes made by the best expert is at most $\frac{T}{2} \wedge (T - k)$. Considering the case $k \ge \frac{T}{2}$ we have that the regret is at least $ck \ge \frac{cT}{2}$. If $k < \frac{T}{2}$ we have
that the regret is at least $\left(\frac{T}{2} - 1\right)(c - 1) + \frac{T}{2} \ge \frac{cT}{2} + \frac{1}{2}$.

\section{Omitted proofs}\label{app:proofs}

\subsection*{Proof of Lemma \ref{lem:mixregret}}
The proof is based on studying the evolution of the cumulative weights $W_t = \sum_i 
w_{t,i}$. Defining $w_{1,i} = 1$ and $L_{T,i} = \sum_{k=1}^T \ell_{t,i}$ for all $i$, we have 
\begin{align*}
 -\min_i L_{T,i} & = \min_i \frac{1}{\eta} \log e^{-\eta L_{T,i}} \le \frac{1}{\eta} \log \sum_i 
e^{-\eta L_{T,i}} = 
 \frac{1}{\eta} \log W_{T+1} = \frac{1}{\eta} \sum_{t=1}^T \log \frac{W_{t+1}}{W_t} + 
\frac{\log W_1}{\eta}
\\
&= \frac{1}{\eta} \sum_{t=1}^T \log \frac{\sum_i w_{t,i}e^{-\eta\ell_{t,i}}}{\sum_j w_{t,j}} + 
\frac{\log N}{\eta} = \frac{1}{\eta} \sum_{t=1}^T  \log \sum_i  
q_{t,i} e^{-\eta\ell_{t,i}}+ \frac{\log N}{\eta}.
\end{align*}
Reordering gives the result.
$\qed$
\subsection*{Proof of Proposition \ref{prop:changingc}}
The proof is based on considering two types of rounds depending on the value of $c_t$. In the case where $c_t$ is such that  $\eta \le 2(1 - 2c_t)$ is satisfied, we have $\EEb{\hloss_t}\le\tloss_t$ by Lemma~\ref{lem:mixbound}. Otherwise, we use the general bound $\EEb{\hloss_t} \le \tloss_t + \frac{\eta}{8}$ as in the proof of Corollary~\ref{cor:main}. The proof is concluded by appealing to Lemma~\ref{lem:mixregret}.
$\qed$
\subsection*{Proof of Theorem \ref{thm:changingabstcosts}}
We begin by studying some properties of the learning rate $\eta^*$ optimizing the function 
$B_T(\eta) = \frac{\log N}{\eta} + \frac{\eta  \tau(\eta)}{8}$, where $\tau(\eta) = \sum_{t=1}^T \II{2(1 - 2c_t) \le \eta}$. We observe that $B_T$ is lower-semicontinuous with at most $T$ discontinuities, and thus it takes 
its minimum either at one of its discontinuities or when its derivative is zero. In either case, 
$\eta^*$ satisfies
\[
 \frac{\log N}{\eta^*} \ge \frac{\eta^*  \tau(\eta^*)}{8},
\]
with equality when the minimum is achieved where the function is continuous. Note that this implies 
that $\tau^* = \tau(\eta^*)$ is bounded as $\tau^* \le 8\log N/(\eta^*)^2$. Furthermore, the bound also entails $R_T^* \ge \frac{2\log N}{\eta^*}$.

Regarding the regret of our algorithm, standard arguments can be used to prove the bound
\[
R_T \le \frac{\log N}{\eta_{T+1}} + \sum_{t=1}^T \pa{\EEb{\hloss_t} - \tloss_t},
\]
which actually holds for general nonincreasing learning-rate sequences (see, e.g., Lemma~2 in \citealp{de2014follow}). By the same argument as used to prove Corollary~\ref{cor:main}, each term in the sum on the right-hand side of the above bound can be bounded as
\[
\EEb{\hloss_t} - \tloss_t \le \frac{\eta_t}{8}\II{2(1-2c_t) < \eta_t},
\]
so that the sum itself can be bounded as 
\[
\sum_{t=1}^T \pa{\EEb{\hloss_t} - \tloss_t} \le \sum_{t=1}^T \frac{\eta_t}{8}\II{2(1-2c_t) < \eta_t} \le \frac 18 \sum_{k=1}^{d_T} \sqrt{\frac{\log N}{k}} \le \frac 14 \sqrt{d_T\log N},
\]
where we used that $d_t$ increases precisely whenever $\II{2(1-2c_t) < \eta_t} = 1$, and that $\sum_{k=1}^K \sqrt{1/k}\le 2\sqrt{K}$. Now notice that by monotonicity of $\tau(\eta)$, we have $d_T > \tau^*$, which implies that there exists a $t^*\le T$ such that $d_t > d^* = \log N / (\eta^*)^2$ holds for all $t>t^*$, so that we have
\[
 \eta_t = \sqrt{\frac{\log N}{d_t}} \le \sqrt{\frac{\log N}{d^*}} = \eta^*.
\]
After round $t^*$, the remaining number of times that our algorithm updates the learning rate is $\sum_{t=t^*+1}^T \II{2(1-2c_t)\le\eta_t}$, which can be bounded as 
\[
\sum_{t=t^*+1}^T \II{2(1-2c_t)\le\eta_t} \le \sum_{t=t^*+1}^T \II{2(1-2c_t)\le\eta^*} \le \sum_{t=1}^T \II{2(1-2c_t)\le\eta^*} = \tau^*,
\]
where the first inequality uses the monotonicity of each of the summands in $\eta$ and the second step adds a number of non-negative terms to the sum. The final step is the definition of $\tau^*$. Altogether, this implies the bound $d_T \le d^* + 1 
+ \tau^* \le 9\log N/(\eta^*)^2 + 1$, so the cumulative mixability gap is bounded as
\[
 \sum_{t=1}^T \pa{\EEb{\hloss_t} - \tloss_t} \le \frac 14 \sqrt{d_T \log N} \le \frac{3\log N}{4\eta^*} + \frac{\sqrt{\log N}}{4}.
\]
On the other hand, we have 
\[
 \frac{\log N}{\eta_{T+1}} = \sqrt{d_T\log N} \le \frac{3\log N}{\eta^*} + \sqrt{\log N}.
\]
Thus, the total regret of our algorithm is bounded as 
\[
 \frac{15\log N}{4\eta^*} + \frac{5\sqrt{\log N}}{4}.
\]
Comparing this bound with the lower bound on $R_T^*$ concludes the proof.
$\qed$
\subsection*{Proof of Theorem \ref{thm:lower}}
Our trick will be to construct a randomized prediction algorithm in the batch setup using an algorithm $\mathcal{A}$ for our main setting through the online-to-batch conversion described in Lemma~\ref{lem:onlinetobatch}. First, we fix $B = \frac{1}{2}$ and observe that it follows from the proof of Lemma \ref{ellqloss} that the class of functions $G$ used in the lower bound is a cube of $\{\pm\frac{1}{2}\}$-valued functions consisting of $\lfloor \log_2 N\rfloor$ unique functions and that we are only interested in distributions with $Y \in \left\{\pm \frac{1}{2}\right\}$. For the exact details implying these two facts we refer to the  definition of the hypercube of probability distributions (Definition 8.1  in \citealp{audibert2009fast}) and the first lines of the proof of Theorem 8.3 and 8.4 where \citeauthor{audibert2009fast} sets $h_1 = B, h_2 = -B$. Finally, his proof uses $p_+ = 1, p_- = 0$ and this implies that the $\min\limits_{g}\EE{|g(X) - Y|^q}$ among all measurable functions is achieved by some $\{\pm\frac{1}{2}\}$-valued function for each distribution in the hypercube of $\lfloor \log_2 N\rfloor$ distributions.

With these important observations in mind we will construct the following game of prediction with expert advice. For each round $t=1,2,\dots,T$, the environment draws $(X_t,Y_t)\sim P$, sets $y_t = Y_t+\frac{1}{2}$, and each expert $i\in[N]$ (we can naturally identify $\ev{g_1,g_2,\dots,g_N}$ with experts) predicts $y_{t,i} = g_i(X_t) + \frac 12$. 

This setup allows us to use our online learning algorithm $\mathcal{A}$ to produce a randomized hypothesis $\hat{g}$ defined on $\mathcal{X}$ and taking its values in $\{\pm\frac{1}{2}, 0\}$ and matching the bound of Lemma~\ref{lem:onlinetobatch}. For each $t=1,2,\dots,T$ matching the conditions of Lemma~\ref{lem:onlinetobatch} we run $\A$ on $(X_1,Y_1),\dots,(X_{t-1},Y_{t-1})$. For any $x\in\mathcal{X}$, $\hat{g}$ produces its prediction by feeding the expert predictions $y'_{t,i} = g_i(x)$ to $\mathcal{A}$, obtaining a random prediction $\wh{y}_T\in\ev{0,*,1}$, and finally mapping $\wh{y}_T$ to an element of $\mathcal{Y}$ according to $0\mapsto -\frac 12$, $1\mapsto \frac 12$ and $*\mapsto 0$. Observe that since $Y \in \{\pm\frac{1}{2}\}$ the following holds if we fix $q = \log_2 \frac{1}{c}$,
\[
|Y - \hat{g}(x)|^q = \begin{cases} \II{\hat{g}(x) \neq Y}, & \mbox{if } \hat{g}(x) = \pm \frac{1}{2}; \\ c, & \mbox{if } \hat{g}(x) = 0. \end{cases}
\]
At the same time, for any $g \in G$, it holds that $|Y - \hat{g}(x)|^q = \II{\hat{g}(x) \neq Y}$. To conclude, we have in this setup that the loss of $\hat{g}$ is binary whenever the output is $\pm \frac{1}{2}$ or equal to $c$ whenever the output is equal to zero regardless of the value of $Y \in \{\pm\frac{1}{2}\}$. Using Lemma~\ref{lem:onlinetobatch} we have
\begin{equation}
\label{eq:riskrelations}
\EE{|Y - \hat{g}(X)|^q} = \frac{1}{T}\EE{\sum_{t=1}^T \wh{\ell}_t}\quad \text{and}\quad \min\limits_{g \in G}\EE{|Y - g(X)|^q} = \min\limits_{g \in G}\EE{\II{g(X) \neq Y}}.
\end{equation}
Now, provided that $c \ge \frac{1}{4}$ for our choice $q = \log_2 \frac{1}{c}$ we have $q \le 2$ and
\[
\frac{1}{2} - \frac{1}{2}(q - 1)\log 2 \le \frac{1}{2^q} \le \frac{1}{2} - \frac{1}{4}(q - 1)\log 2,
\]
which implies 
\begin{equation}
\label{eq:cqrelation}
\frac{1}{4}(q - 1)\log 2 \le \frac{1}{2} - c \le \frac{1}{2}(q - 1)\log 2.
\end{equation}
Since $T \ge 4\log N$ in order to apply Lemma \ref{ellqloss} we need $q \ge 1 + \sqrt{\frac{\lfloor\log_2 N\rfloor}{4T}}$ which holds if
\begin{equation}
\label{eq:condition}
\frac{1}{2} - c \ge \log 2\sqrt{\frac{\lfloor\log_2 N\rfloor}{4T}}.
\end{equation}
This corresponds to the regime where the abstentions can give some gain compared to the slow rate bound $O(\sqrt{T\log N})$. Therefore, provided that $c \ge \frac{1}{4}$ since the lower bound of Lemma \ref{ellqloss} holds for 
any $[-\frac{1}{2}, \frac{1}{2}]$-valued estimator (and therefore, for $\{\pm \frac{1}{2}, 0\}$)-valued $\hat{g}$) we have for some $C > 0$,
\[
 \EE{\sum_{t=1}^T \wh{\ell}_t} \ge T\min\limits_{g \in G}\EE{\II{Y\neq g(X)}}+ C\frac{\log N}{1 - 2c},
\]
where we used \eqref{eq:riskrelations}. We remark that the bound in Lemma \ref{lem:audibert} has the factor $\frac{q}{90(q - 1)}\lor e^{-1}$ with the first term being dominant if $1 \le q \le 1.031$. However, it can be easily seen that the values of $q > 1.031$ can be controlled by the choice of the absolute constant $C$ above. Finally, by Jensen's inequality
\[
T\min\limits_{g \in G}\EE{\II{Y\neq g(X)}} = \min\limits_{g \in G}\sum\limits_{t = 1}^T\EE{\II{Y_t\neq g(X_t)}} \ge \EE{\min\limits_{g \in G}\sum\limits_{t = 1}^T\II{Y_t\neq g(X_t)}} = \EE{\min\limits_{i}\sum_{t = 1}^T \ell_{t, i}}.
\]
This implies that $T\min\limits_{g \in G}\EE{\II{Y \neq g(X)}}$ is greater than the expected regret of the best expert.
Therefore, we show that for any randomized algorithm with abstentions there is a random environment such that the expected regret is $\Omega\left(\frac{\log N}{1 - 2c}\right)$ provided that $q = \log_2 \frac{1}{c}$ satisfies $q - 1 \ge\sqrt{\frac{\lfloor\log_2 N\rfloor}{4T}}$. This, of course, implies the existence of the deterministic strategy for the environment.

To finish the proof we need to consider the regime 
\[
1 \le q \le 1 + \sqrt{\frac{\lfloor\log_2 N\rfloor}{4T}},
\]
and show that it gives the lower bound of order $\Omega\left(\sqrt{T\log N}\right)$ which does not depend of $c$ anymore. Since the total loss of any algorithm grows monotonically as $c$ approaches $\frac{1}{2}$, we can prove the lower bound only for $q = 1 + \sqrt{\frac{\lfloor\log_2 N\rfloor}{4T}}$, where as before $q = \log_2\frac{1}{c}$. Indeed, the lower bound will extend automatically for smaller values of $q$ (this corresponds to larger values of $c$). Fortunately, that value of $q$ is covered by the first part of the proof. It is left to observe that in this case we have that $\frac{\log N}{1 - 2c}$ is controlled by $\sqrt{T\log N}$ up to a multiplicative constant factor. The claim follows.
$\qed$
\subsection*{Proof of Theorem \ref{thm:multiclass}}
We begin by defining the misclassification probability 
\[r_t = 
\sum_{k=1}^K p_{t,k} \II{k\neq y_{t}} = \sum_{i=1}^N q_{t,i} \II{y_{t,i}\neq y_t}
\]
and  
noticing that the mix loss can be again written as
\begin{align*}
 -\frac{1}{\eta}\log\sum_i q_{t,i} e^{-\eta \ell_{t,i}} 
 &= -\frac{1}{\eta}\log \pa{1 + r_t\pa{e^{-\eta} - 1}}.
\end{align*}
On the other hand, the expected loss of our algorithm is
\begin{align*}
 \hloss_t = \ab_t c + (1-\ab_t) \II{k_t^* \neq y_t}.
\end{align*}
Notice that this equals $c$ when $p_t^* < \frac 12$. In this case, the error probability satisfies 
$r_t \ge \frac 12$ so that the mix loss is can be lower bounded as

\begin{align*}
-\frac{1}{\eta}\log \pa{1 + r_t\pa{e^{-\eta} - 1}} &\ge 
-\frac{1}{\eta}\log \pa{\frac 12 \pa{1 + e^{-\eta}}} \ge -\frac{1}{\eta}\log \pa{1- \frac \eta2 + \frac {\eta^2}{4}} \ge \frac 12 - \frac \eta4
\\
&\ge \frac 12 - \frac{2(1 - 2c)}{4} = c,
\end{align*}
where in the first line we used the inequalities $e^{-x}\le 1-x + \frac{x^2}{2}$ and $\log(1-x)\le -x$ that both hold for $x\ge 0$, and our assumption that $\eta\le 2(1-2c)$ in the last line.
Thus, the expected loss lower bounds the mix loss in this case.

Considering the case where $p_t^*\ge \frac 12$, we have $r_t = 1 - p_t^* \le \frac 12$ whenever 
$k_t^* = y_t$ holds, and $\frac 12 \le p_t^*\le r_t$ otherwise (since in this case we have $r_t = 
\sum_{k\neq y_t} p_{k,t} \ge p^*_t$). In the first case, we have
\begin{align*}
 \EEb{\hloss_t} = \ab_t c = 2(1-p_t^*) c = 2r_tc = r_t + (2c - 1)r_t,
\end{align*}
while in the second one we have
\begin{align*}
 \EEb{\hloss_t} = \ab_t (c-1) + 1 = 2(p_t^*-1) (1-c) + 1 \le 2(r_t - 1) (1-c) + 1 = r_t + (2c - 1)(1 
- 
r_t).
\end{align*}
Summarizing both cases, we have
\[
 \EEb{\hloss_t}\le r_t + \pa{2c - 1} (r_t \wedge (1-r_t)).
\]
The proof is concluded by noticing that this function is upper-bounded by the mix loss $\tloss_t$ by 
the same argument as used in the proof of Lemma~\ref{lem:mixbound}, and using 
Lemma~\ref{lem:mixregret} to upper-bound the cumulative mix loss.
$\qed$
\subsection*{Proof of Corollary \ref{cor:inflclass}}
Our key technical tool is the following lemma which can be seen as an online learning analog of the renowned lemma by \cite{Vapnik68}. 

\begin{lemma}[\cite{ben2009agnostic}]
\label{lem:bendavid}
Fix $T$ and assume that $\mathcal{H}$ has the Littlestone dimension $L$. There is a set $\mathcal{H}_{T}$ of at most $\sum\limits_{i = 0}^L{T \choose i}$ experts such that for any $h \in \mathcal{H}$ there is at least one expert in $\mathcal{H}_{T}$ predicting as $h$ on any sequence of length at most $T$.
\end{lemma}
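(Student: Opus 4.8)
The plan is to reconstruct the proof of \citet{ben2009agnostic} as an online analog of the Sauer--Shelah argument, organized around the \emph{Standard Optimal Algorithm} (SOA). Recall that the Littlestone dimension $L$ is the depth of the deepest complete binary \emph{mistake tree} shattered by $\mathcal{H}$: internal nodes are labeled by instances, the two outgoing edges by the two labels, and a depth-$d$ tree is shattered if every root-to-leaf path is realized by some $h\in\mathcal{H}$. The SOA maintains a version space $V_t\subseteq\mathcal{H}$ of hypotheses consistent with the labels seen so far and, at each round, predicts the label $b$ for which the restricted class $V_t^{(b)}=\{h\in V_t: h(x_t)=b\}$ has the larger Littlestone dimension. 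The first step is the standard observation that at most one of the two restrictions can retain the full dimension $\text{Ldim}(V_t)$---otherwise one could glue the two shattered subtrees at a fresh root labeled $x_t$ to build a shattered tree of depth $\text{Ldim}(V_t)+1$---so whenever the SOA errs, the dimension of its version space strictly decreases. Since the dimension starts at $L$ and stays non-negative, the SOA makes at most $L$ mistakes on any realizable sequence.

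The heart of the construction is to build one expert per possible \emph{mistake pattern}. For each subset $I\subseteq[T]$ with $|I|\le L$, I define an online expert $E_I$ that simulates the SOA but is instructed in advance to treat round $t$ as a mistake exactly when $t\in I$: at each round $E_I$ outputs the SOA prediction $\hat{y}_t$ determined by its current version space, and then updates by moving to $V_t^{(\hat{y}_t)}$ if $t\notin I$ and to $V_t^{(1-\hat{y}_t)}$ if $t\in I$ (declaring some fixed arbitrary behavior if a forced restriction would empty the version space). This $E_I$ is a legitimate online algorithm, since $I$ is fixed a priori and every update depends only on the past. The key claim, proved by induction on $t$, is that the entire label sequence generated by a run of the SOA is a deterministic function of its mistake set: on non-mistake rounds the revealed label equals $\hat{y}_t$, on mistake rounds it equals $1-\hat{y}_t$, and in both cases $\hat{y}_t$ and the next version space are already determined by the labels seen so far. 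Consequently, fixing any $h\in\mathcal{H}$ and any sequence $x_1,\dots,x_T$ and letting $I_h$ be the (size $\le L$) set of rounds on which the SOA errs against $h(x_1),\dots,h(x_T)$, the expert $E_{I_h}$ reproduces exactly those labels.

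Putting the pieces together, I would take $\mathcal{H}_T=\{E_I: I\subseteq[T],\,|I|\le L\}$; by construction this set has cardinality $\sum_{i=0}^L\binom{T}{i}$, and the previous paragraph shows that for every $h\in\mathcal{H}$ and every sequence of length at most $T$ there is an expert (namely $E_{I_h}$) predicting exactly as $h$. I expect the main obstacle to be the determinism lemma of the second paragraph: one must argue carefully that the mistake set alone pins down the whole trajectory---the predictions, the revealed labels, and the evolving version space---so that experts indexed by mistake sets genuinely suffice, and that forced mistakes along the relevant $I_h$ never attempt an impossible restriction, which is precisely guaranteed by the dimension-drop property since at most $L$ mistakes can be forced before the dimension is exhausted. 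The SOA mistake bound, though standard, is the other ingredient requiring the tree-surgery argument sketched above; everything else is bookkeeping and a binomial count.
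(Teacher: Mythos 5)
The paper does not actually prove this lemma: it is imported verbatim from \citet{ben2009agnostic}, and your reconstruction follows exactly the argument of that reference --- the SOA with its dimension-drop mistake bound, one simulated expert per potential mistake set $I\subseteq[T]$ with $|I|\le L$, the observation that a run of the SOA against any $h$ is determined (given the instances) by its mistake set, and the count $\sum_{i=0}^{L}\binom{T}{i}$. The essential ingredients are all present and correct: the tree-gluing argument showing at most one restriction can retain the full Littlestone dimension, the induction establishing determinism of the trajectory, the fact that $h$ remains in $E_{I_h}$'s version space throughout (so the forced restrictions along $I_h$ are never empty and indeed $|I_h|\le L$), and the binomial bookkeeping.

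One slip should be fixed in the construction itself. As you define it, $E_I$ \emph{outputs} the SOA prediction $\wh{y}_t$ at every round, including rounds $t\in I$, and only the version-space update is flipped. Under that literal definition, $E_{I_h}$ disagrees with $h$ on precisely the rounds in $I_h$, so it does not ``reproduce exactly those labels'' as your final paragraph claims, and the lemma's conclusion fails on every round in $I$. The expert must predict $1-\wh{y}_t$ on rounds $t\in I$; equivalently, the invariant is that the expert always updates with its \emph{own} prediction, $V_{t+1}=V_t^{(\text{prediction})}$, which is consistent with the update rule you wrote down ($V_t^{(1-\wh{y}_t)}$ on $I$, $V_t^{(\wh{y}_t)}$ off $I$) but not with the stated output. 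Since your determinism claim and the matching of $E_{I_h}$ to $h$ already presuppose the flipped output, this is a one-word misstatement rather than a missing idea, but it is worth correcting, as the construction as displayed does not satisfy the lemma.
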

Now the proof follows immediately from Corollary \ref{cor:main}, Lemma \ref{lem:bendavid} and $\sum\limits_{i = 0}^L{T \choose i} \le \left(\frac{eT}{L}\right)^L$ whenever $T \ge L$. We remark that the results of Section \ref{sec:changingcosts} can also be straightforwardly adapted for the sets of experts having finite Littlestone dimension.
$\qed$
\end{document}